\titleformat*{\section}{\fontsize{12}{15}\bfseries}
\titleformat*{\subsection}{\fontsize{12}{15}\bfseries}
\titlespacing\section{0pt}{12pt plus 4pt minus 2pt}{2pt plus 2pt minus 2pt}
\titlespacing\subsection{0pt}{12pt plus 4pt minus 2pt}{2pt plus 2pt minus 2pt}
\newtheorem{theorem}{{\bf Theorem}}
\newtheorem{corollary}{{\bf Corollary}}
\newtheorem{lemma}{{\bf Lemma}}
\newtheorem{remark}{{\bf Remark}}
\let\oldabs\abs
\def\abs{\@ifstar{\oldabs}{\oldabs*}}
\let\oldnorm\norm
\def\norm{\@ifstar{\oldnorm}{\oldnorm*}}
\newcommand{\I}[0]{\mathbb{I}}
\newcommand{\E}[0]{\mathbb{E}}
\newcommand{\R}[0]{\mathbb{R}}
\newcommand{\cR}{{\cal R}}
\def\R{\mathbb{R}}
\def\E{\mathbb E}
\def\I{{\mathbb I}}
\def\blackslug{\hbox{\hskip 1pt \vrule width 4pt height 8pt depth 1.5pt
\hskip 1pt}}
\def\qed{\quad\blackslug\lower 8.5pt\null\par}
\definecolor{gray}{RGB}{155,155,155}
\begin{document}

\begin{center}
{\fontsize{14}{0} \bf \selectfont
Weighted Empirical Risk Minimization:\\ Sample Selection Bias Correction based on Importance Sampling}

\bigskip
\medskip

{\fontsize{10}{0} \selectfont
    Robin Vogel$^{\text{*a, b}}$, Mastane Achab$^{\text{*a}}$, Stéphan Clémençon$^{\text{a}}$,
    Charles Tillier$^{\text{a}}$\\
    $^a$ LTCI, Télécom Paris\\
    19 Place Marguerite Perey, 91120 Palaiseau, France\\
    $^b$ IDEMIA,\\
    2 Place Samuel de Champlain, 92400 Courbevoie, France\\
    $^{*}$Corresponding Author: \texttt{firstname.lastname@telecom-paris.fr} \\
}
\end{center}
 
{ \par \centering \textbf{ABSTRACT} \par}

\noindent
We consider statistical learning problems, when the distribution $P'$ of the training observations $Z'_1,\; \ldots,\; Z'_n$ differs from the distribution $P$ involved in the risk one seeks to minimize (referred to as the \textit{test distribution}) but is still defined on the same measurable space as $P$ and dominates it. In the unrealistic case where the likelihood ratio $\Phi(z)=dP/dP'(z)$ is known, one may straightforwardly extends the Empirical Risk Minimization (ERM) approach to this specific \textit{transfer learning} setup using the same idea as that behind Importance Sampling, by minimizing a weighted version of the empirical risk functional computed from the 'biased' training data $Z'_i$ with weights $\Phi(Z'_i)$. Although the \textit{importance function} $\Phi(z)$ is generally unknown in practice, we show that, in various situations frequently encountered in practice, it takes a simple form and can be directly estimated from the $Z'_i$'s and some auxiliary information on the statistical population $P$. By means of linearization techniques, we then prove that the generalization capacity of the approach aforementioned is preserved when plugging the resulting estimates of the $\Phi(Z'_i)$'s into the  weighted empirical risk. Beyond these theoretical guarantees, numerical results provide strong empirical evidence of the relevance of the approach promoted in this article.

\bigskip \noindent
\textbf{Keywords:} 
Statistical Learning Theory, Importance Sampling, Transfer Learning.
 
\section{Introduction}

Prediction problems are of major importance in statistical learning. The main paradigm of predictive learning is \textit{Empirical Risk Minimization} (ERM in abbreviated form), see \textit{e.g.} \cite{Dev1}. In the standard setup, $Z$ is a random variable (r.v. in short) that takes its values in a feature space $\mathcal{Z}$ with distribution $P$, $\Theta$ is a parameter space and $\ell:\Theta \times \mathcal{Z}\rightarrow \mathbb{R}_+$ is a (measurable) loss function. The risk is then defined by: $\forall \theta\in \Theta$,
\begin{equation}\label{eq:risk}
\cR_P(\theta)=\mathbb{E}_P\left[ \ell(\theta, Z) \right],
\end{equation}
and more generally for any measure $Q$ on $\mathcal{Z}$: $\cR_Q(\theta)=\int_\mathcal{Z} \ell (\theta, z) dQ(z)$.
In most practical situations, the distribution $P$ involved in the definition of the risk is unknown and learning is based on the sole observation of an independent and identically distributed (i.i.d.) sample $Z_1,\; \ldots,\; Z_n$ drawn from $P$ and the risk \eqref{eq:risk} must be replaced by an empirical counterpart (or a possibly smoothed/penalized version of it), typically:
\begin{equation}\label{eq:emp_risk}
\widehat{\cR}_P(\theta)=\frac{1}{n}\sum_{i=1}^n \ell(\theta, Z_i)=\cR_{\widehat{P}_n}(\theta),
\end{equation}
where $\widehat{P}_n=(1/n)\sum_{i=1}^n\delta_{Z_i}$ is the empirical measure of $P$ and $\delta_z$ denotes the Dirac measure at any point $z$. With the design of successful algorithms such as neural networks, support vector machines or boosting methods to perform ERM,  the practice of predictive learning has recently received a significant attention and is now supported by a sound theory based on results in empirical process theory. The performance of minimizers of \eqref{eq:emp_risk} can be indeed studied by means of concentration inequalities, quantifying the fluctuations of the maximal deviations $\sup_{\theta\in \Theta}\vert \widehat{\cR}_P(\theta)- \cR_P(\theta) \vert$ under various complexity assumptions for the functional class $\mathcal{F}=\{\ell(\theta,\; \cdot):\; \theta\in \Theta\}$ (\textit{e.g.} {\sc VC} dimension, metric entropies, Rademacher averages), see \cite{boucheron2013concentration} for instance. Although, in the Big Data era, the availability of massive digitized information to train predictive rules is an undeniable opportunity for the widespread deployment of machine-learning solutions, the poor control of the data acquisition process one is confronted with in many applications puts practicioners at risk of jeopardizing the generalization ability of the rules produced by the algorithms implemented. Bias selection issues in machine-learning are now the subject of much attention in the literature, see  \cite{BCZSK16}, \cite{ZWYOC17}, \cite{BHSDR19}, \cite{Liu2016TransferLB} or
\cite{huang2007correcting}. In the context of face analysis, a research area
including a broad range of applications such as face detection, face
recognition or face attribute detection, machine learning algorithms trained
with baised training data, \textit{e.g.} in terms of gender or ethnicity, raise concerns
about fairness in machine learning. Unfair algorithms may induce systemic undesired
disadvantages for specific social groups,
see \cite{das2018mitigating} for further details. Several
examples of bias in deep learning based face recognition systems are discussed
in \cite{nagpal2019deep}.

Throughout the present article, we consider the case where the i.i.d. sample $Z'_1,\;\ldots,\; Z'_n$ available for training is not drawn from $P$ but from another distribution $P'$, with respect to which $P$ is absolutely continuous, and the goal pursued is to set theoretical grounds for the application of ideas behind Importance Sampling (IS in short) methodology to extend the ERM approach to this learning setup. We highlight that the problem under study is a very particular case of \textit{Transfer Learning} (see \textit{e.g.} \cite{5288526}, \cite{BenDavid10} and \cite{storkey2009training}), a research area currently receiving much attention in the literature and encompassing general situations where the information/knowledge one would like to transfer may take a form in the \textit{target} space very different from that in the \textit{source} space (referred to as \textit{domain adaptation}).

\noindent {\bf Weighted ERM (WERM).} In this paper, we investigate conditions guaranteeing that values for the parameter $\theta$ that nearly minimize $\eqref{eq:risk}$ can be obtained through minimization of a weighted version of the empirical risk based on the $Z'_i$'s, namely
\begin{equation}\label{eq:weight_emp_risk}
\widetilde{\cR}_{w,n}(\theta)=\cR_{\widetilde{P}_{w,n}}(\theta),
\end{equation}
where $\widetilde{P}_{w,n}=(1/n)\sum_{i=1}^nw_i\delta_{Z'_i}$ and $w=(w_1,\;\ldots,\; w_n)\in \mathbb{R}_+^n$ is a certain weight vector. Of course, ideal weights $w^*$ are given by the likelihood function $\Phi(z)=(dP/dP')(z)$: $w_i^*=\Phi(Z'_i)$ for $i\in \{1,\; \ldots,\; n \}$. In this case, the quantity \eqref{eq:weight_emp_risk} is obviously an unbiased estimate of the true risk \eqref{eq:risk}:
\begin{equation}
\mathbb{E}_{P'}\left[ \cR_{\widetilde{P}_{w^*,n}}(\theta)   \right]=\cR_P(\theta),
 \end{equation}
 and generalization bounds for the $\cR_{P}$-risk excess of minimizers of $\widetilde{\cR}_{w^*,n}$ can be directly established by studying the concentration properties of the empirical process related to the $Z'_i$'s and the class of functions $\{\Phi(\cdot)\ell(\theta,\; \cdot):\; \theta\in \Theta  \}$ (see section \ref{sec:background} below). However, the \textit{importance function} $\Phi$ is unknown in general, just like distribution $P$. It is the major purpose of this article to show that, in far from uncommon situations, the (ideal) weights $w_i^*$ can be estimated from the $Z_i'$s combined with auxiliary information on the target population $P$. As shall be seen below, such favorable cases include in particular classification problems where class probabilities in the test stage differ from those in the training step, risk minimization in stratified populations (see \cite{Bekker2018}), with strata statistically represented in a different manner in the test and training populations, positive-unlabeled learning (PU-learning, see \textit{e.g.} \cite{du2014analysis}). In each of these cases, we show that the stochastic process obtained by plugging the weight estimates in the weighted empirical risk functional \eqref{eq:weight_emp_risk} is much more complex than a simple empirical process (\textit{i.e.} a collection of i.i.d. averages) but can be however studied by means of \textit{linearization techniques}, in the spirit of the ERM extensions established in \cite{Clemencon08Ranking} or \cite{CV08NIPS1}. Learning rate bounds for minimizers of the corresponding risk estimate are proved and, beyond these theoretical guarantees, the performance of the weighted ERM approach is supported by convincing numerical results.

 The article is structured as follows. In section \ref{sec:background}, the ideal case where the importance function $\Phi$ is known is preliminarily considered and a first basic example where the optimal weights can be easily inferred and plugged into the risk without deteriorating the learning rate is discussed. The main results of the paper are stated in section \ref{sec:main}, which shows that the methodology promoted can be applied to two important problems in practice, risk minimization in stratified populations and PU-learning, with generalization guarantees. Illustrative numerical experiments are displayed in section \ref{sec:num}, while some concluding remarks are collected in section \ref{sec:conclusion}. Proofs and additional results are deferred to the Supplementary Material.

\section{Importance Sampling - Risk Minimization with Biased Data}
\label{sec:background}

Here and throughout, the indicator function of any event $\mathcal{E}$ is denoted by $\mathbb{I}\{\mathcal{E}\}$, the $\sup$ norm of any bounded function $h:\mathcal{Z}\rightarrow \mathbb{R}$ by $\vert\vert h\vert\vert_{\infty}$.
We place ourselves in the framework of statistical learning based on biased training data previously introduced. As a first go, we consider the unrealistic situation where the importance function $\Phi$ is known, insofar as we shall subsequently develop techniques aiming at mimicking the minimization of the ideally weighted empirical risk
\begin{equation}\label{eq:ideal_wer}
\widetilde{\cR}_{w^*,n}(\theta)=\frac{1}{n} \sum_{i=1}^n w_i^* \ell(\theta, Z_i'),
\end{equation}
namely the (unbiased) Importance Sampling estimator of \eqref{eq:risk} based on the instrumental data $Z'_1,\; \ldots,\; Z'_n$. The following result describes the performance of minimizers $\widetilde{\theta}^*_n$ of \eqref{eq:ideal_wer}. Since the goal of this paper is to promote the main ideas of the approach rather than to state results with the highest level of generality due to space limitations, we assume throughout the article for simplicity that $\ell$ and $\Phi$ are both bounded functions. For $\sigma_1,\; \ldots,\; \sigma_n$ independent Rademacher
random variables (\textit{i.e.} symmetric $\{ -1,1 \}$-valued r.v.'s), independent from the $Z'_i$'s, we define the Rademacher average associated to the class of function $\mathcal{F}$ as
$
R'_n(\mathcal{F}):=\mathbb{E}_{\mathbf{\sigma}}\left[\sup_{\theta\in \Theta}\frac{1}{n}\left\vert \sum_{i=1}^n\sigma_i \ell(\theta,Z'_i) \right\vert\right].
$
This quantity can be bounded by metric entropy methods under appropriate complexity assumptions on the class $\mathcal{F}$, it is for instance of order $O_{\mathbb{P}}(1/\sqrt{n})$ when $\mathcal{F}$ is a {\sc VC} major class with finite {\sc VC} dimension, see \textit{e.g.} \cite{Boucheron2005}.
\begin{lemma}\label{ERM1}
With probability at least $1-\delta$, we have: $\forall n\geq 1$,
\begin{equation*}
    \cR_P (\widetilde{\theta}^*_n) - \min_{\theta\in \Theta}\cR_P(\theta) \le
    4\vert\vert \Phi\vert\vert_{\infty} \E \left[R'_n(\mathcal{F})\right]
    + 2\vert\vert \Phi\vert\vert_{\infty}\sup_{(\theta,z)\in \Theta\times \mathcal{Z}} \ell(\theta, z) \sqrt{\frac{2 \log (1/\delta)}{n}}.
\end{equation*}
\end{lemma}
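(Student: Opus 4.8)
The plan is to recognize the ideally weighted risk \eqref{eq:ideal_wer} as a genuine empirical mean, over the i.i.d. sample $Z'_1,\ldots,Z'_n\sim P'$, of the functions $g_\theta(z):=\Phi(z)\ell(\theta,z)$, and then to run the classical ERM analysis on the transformed class $\widetilde{\F}=\{g_\theta:\theta\in\Theta\}$. Since $\E_{P'}[g_\theta(Z')]=\int_{\mathcal{Z}}\Phi(z)\ell(\theta,z)\,dP'(z)=\cR_P(\theta)$, the quantity $\widetilde{\cR}_{w^*,n}(\theta)$ is an unbiased estimate of $\cR_P(\theta)$, and the usual comparison between the excess risk of an empirical minimizer and the uniform deviation gives
\[
\cR_P(\widetilde{\theta}^*_n)-\min_{\theta\in\Theta}\cR_P(\theta)\;\le\;2\sup_{\theta\in\Theta}\bigl\vert\,\widetilde{\cR}_{w^*,n}(\theta)-\cR_P(\theta)\,\bigr\vert,
\]
obtained by inserting and cancelling the value of $\widetilde{\cR}_{w^*,n}$ at a minimizer $\theta^\star$ of $\cR_P$ and bounding the two remaining differences by the supremum (the term $\widetilde{\cR}_{w^*,n}(\widetilde{\theta}^*_n)-\widetilde{\cR}_{w^*,n}(\theta^\star)\le 0$ by definition of $\widetilde{\theta}^*_n$). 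It then suffices to control the right-hand side in expectation and to add a concentration term, each contributing one of the two terms of the stated bound.

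For the concentration, I would apply the bounded differences (McDiarmid) inequality to $D(Z'_1,\ldots,Z'_n):=\sup_{\theta\in\Theta}\vert\widetilde{\cR}_{w^*,n}(\theta)-\cR_P(\theta)\vert$. Since $\ell$ and $\Phi$ are bounded, each $g_\theta$ satisfies $\vert g_\theta\vert\le\vert\vert\Phi\vert\vert_\infty\sup_{(\theta,z)\in\Theta\times\mathcal{Z}}\ell(\theta,z)$, so replacing a single $Z'_i$ changes $D$ by at most $(2/n)\vert\vert\Phi\vert\vert_\infty\sup_{(\theta,z)}\ell(\theta,z)$. McDiarmid's inequality then yields, with probability at least $1-\delta$,
\[
D\;\le\;\E[D]+\vert\vert\Phi\vert\vert_\infty\sup_{(\theta,z)}\ell(\theta,z)\sqrt{\frac{2\log(1/\delta)}{n}},
\]
and multiplying by the factor $2$ above produces exactly the second term of the claimed inequality.

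It remains to show $\E[D]\le 2\vert\vert\Phi\vert\vert_\infty\,\E[R'_n(\F)]$. First I would symmetrize in the standard way, introducing Rademacher variables $\sigma_i$ independent of the $Z'_i$, to get $\E[D]\le 2\,\E_{Z'}\E_{\sigma}\sup_{\theta}\vert\frac1n\sum_i\sigma_i\Phi(Z'_i)\ell(\theta,Z'_i)\vert$. The key step, and the one requiring the most care, is to strip away the data-dependent weights $\Phi(Z'_i)$ so as to recover the Rademacher average $R'_n(\F)$ built from $\ell$ alone. Conditionally on the $Z'_i$, the coefficients $\alpha_i:=\Phi(Z'_i)/\vert\vert\Phi\vert\vert_\infty$ lie in $[0,1]$, so the contraction principle for Rademacher processes (Kahane's inequality applied to the convex functional given by the supremum norm on the Banach space of bounded functions on $\Theta$) gives $\E_\sigma\sup_\theta\vert\sum_i\sigma_i\alpha_i\ell(\theta,Z'_i)\vert\le\E_\sigma\sup_\theta\vert\sum_i\sigma_i\ell(\theta,Z'_i)\vert$. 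Multiplying through by $\vert\vert\Phi\vert\vert_\infty$ and taking the outer expectation over $Z'$ yields
\[
\E_{Z'}\E_\sigma\sup_{\theta}\Bigl\vert\tfrac1n\textstyle\sum_i\sigma_i\Phi(Z'_i)\ell(\theta,Z'_i)\Bigr\vert\;\le\;\vert\vert\Phi\vert\vert_\infty\,\E[R'_n(\F)],
\]
so that $\E[D]\le 2\vert\vert\Phi\vert\vert_\infty\,\E[R'_n(\F)]$; after the leading factor $2$ this is the first term of the bound, and combining the two pieces completes the argument. The main obstacle is precisely this last comparison: $R'_n(\F)$ is defined using $\ell$ only, whereas the symmetrized process naturally involves the weighted class $\Phi\ell$, and one must verify that the boundedness of $\Phi$ allows the reduction to $\ell$ at the sole cost of the factor $\vert\vert\Phi\vert\vert_\infty$, without incurring extra constants.
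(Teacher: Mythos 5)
Your proof is correct and follows essentially the same route as the paper's: the paper invokes Theorem 3.2 of Boucheron et al.\ (2005) on the weighted class $\{\Phi(\cdot)\ell(\theta,\cdot)\}$ (which is precisely your McDiarmid-plus-symmetrization argument), then uses the contraction principle to replace the Rademacher average of the weighted class by $\vert\vert\Phi\vert\vert_\infty\,\E[R'_n(\F)]$, exactly as you do. Your only difference is that you unpack the cited maximal-deviation theorem into its constituent steps and spell out the scalar (Kahane) form of the contraction, both of which are correct.
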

Of course, when $P'=P$, we have $\Phi\equiv 1$ and the bound stated above simply describes the performance of standard empirical risk minimizers. The proof is based on the standard bound
$$
 \cR_P (\widetilde{\theta}^*_n) - \min_{\theta\in \Theta}\cR_P(\theta)\leq 2\sup_{\theta\in\Theta}\left\vert \widetilde{\cR}_{w^*,n}(\theta)- \mathbb{E}\left[ \widetilde{\cR}_{w^*,n}(\theta) \right]  \right\vert,
$$
combined with basic concentration results for empirical processes, see the Supplementary Material for further details. Of course, the importance function $\Phi$ is generally unknown and must be estimated in practice. As illustrated by the elementary example below (related to binary classification, in the situation where the probability of occurence of a positive instance significantly differs in the training and test stages), in certain statistical learning problems with biased training distribution, $\Phi$ takes a simplistic form and can be easily estimated from the $Z'_i$'s combined with auxiliary information on $P$.

\noindent {\bf Binary classification with varying class probabilities.} The
flagship problem in supervised learning corresponds to the simplest situation,
where $Z=(X,Y)$, $Y$ being a binary variable valued in $\{-1,+1 \}$ say, and
the r.v. $X$ takes its values in a measurable space $\mathcal{X}$ and models
some information hopefully useful to predict $Y$. The parameter space $\Theta$
is a set $\mathcal{G}$ of measurable mappings (\textit{i.e.} classifiers)
$g:\mathcal{X}\to \{-1,\; +1\}$ and the loss function is given by $\ell(g,\;
(x,y))=\mathbb{I}\{ g(x) \neq y  \}$ for all $g$ in $\mathcal{G}$ and any
$(x,y)\in \mathcal{X}\times \{-1,\; +1 \}$. The distribution $P$ of the random
pair $(X,Y)$ can be either described by $X$'s marginal distribution $\mu(dx)$
and the posterior probability $\eta(x)=\mathbb{P}\{Y=+1\mid X=x  \}$ or else by
the triplet $(p, F_+,F_-)$ where $p=\mathbb{P}\{Y=+1  \}$ and $F_{\sigma}(dx)$
is $X$'s conditional distribution given $Y=\sigma 1$ with $\sigma\in\{-,\;
+\}$. It is very common that the fraction of positive instances in the training
dataset is significantly lower than the rate $p$ expected in the test stage,
supposed to be known here (see the Supplementary Material for the case where the rate $p$ is only approximately known).
We thus consider the case where the distribution
$P'$ of the training data $(X'_1,Y'_1),\; \ldots,\, (X'_n,Y'_n)$ is described
by the triplet $(p', F_+,F_-)$ with $p'<p$. The likelihood function takes the
simple following form $$
    \Phi(x,y)=
    \mathbb{I}\{ y=+1 \}\frac{p}{p'}
    + \mathbb{I}\{ y=-1 \}\frac{1-p}{1-p'}\overset{def}{=}\phi(y),
$$
which reveals that it depends on the label $y$ solely, and the ideally weighted empirical risk process is
\begin{equation}\label{eq:th_weight_risk}
    \widetilde{\cR}_{w^*,n}(g)
    =\frac{p}{p'}\frac{1}{n}\sum_{i: Y'_i=1}\mathbb{I}\{g(X'_i)=-1  \}
    + \frac{1-p}{1-p'}\frac{1}{n}\sum_{i: Y'_i=-1}\mathbb{I}\{g(X'_i)=+1  \}.
\end{equation}
In general the theoretical rate $p'$ is unknown and one replaces
\eqref{eq:th_weight_risk} with

\begin{equation}\label{eq:emp_weighted_risk}
    \widetilde{\cR}_{\widehat{w}^*,n}(g)
    =\frac{p}{n'_+}\sum_{i: Y'_i=1}\mathbb{I}\{g(X'_i)=-1  \}
    + \frac{1-p}{n'_-}\sum_{i: Y'_i=-1}\mathbb{I}\{g(X'_i)=+1  \},
\end{equation}
where $n'_+=\sum_{i=1}^n\mathbb{I}\{Y'_i=+1  \}=n-n'_-$,
$\widehat{w}_i^*=\widehat{\phi}(Y'_i)$ and
$\widehat{\phi}(y)=\mathbb{I}\{ y=+1 \}np/n' _++ \mathbb{I}\{ y=-1 \}n(1-p)/n'_-
$. The stochastic process above is not a standard empirical process but a collection of sums of two ratios of basic averages. However, the following result provides a uniform control of the deviations between the ideally weighted empirical risk and that obtained by plugging the empirical weights into the latter.
\begin{lemma}\label{lem:approx1}
Let $\varepsilon\in(0,\; 1/2)$. Suppose that $p'\in (\varepsilon,\; 1-\varepsilon)$. For any $\delta\in (0,1)$, we have with probability larger than $1-\delta$:
$$
\sup_{g\in \mathcal{G}}\left\vert  \widetilde{\cR}_{\widehat{w}^*,n}(g)-    \widetilde{\cR}_{w^*,n}(g) \right\vert
\leq \frac{2}{\varepsilon^2} \sqrt{\frac{\log (2/\delta)}{2n}},
$$
as soon as $n\geq 2\log(2/\delta)/\varepsilon^2$.
\end{lemma}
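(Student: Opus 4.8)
The plan is to exploit that both functionals are weighted empirical averages over the \emph{same} sample $(X'_i,Y'_i)_{i\le n}$, with weights that depend on the classifier $g$ only through a nonnegative bounded loss, so that the supremum over $\mathcal{G}$ should collapse to a scalar deviation. First I would rewrite the two risks as $\widetilde{\cR}_{w^*,n}(g)=\frac1n\sum_{i=1}^n\phi(Y'_i)\ell(g,Z'_i)$ and $\widetilde{\cR}_{\widehat{w}^*,n}(g)=\frac1n\sum_{i=1}^n\widehat\phi(Y'_i)\ell(g,Z'_i)$, so that
$$
\widetilde{\cR}_{\widehat{w}^*,n}(g)-\widetilde{\cR}_{w^*,n}(g)=\frac1n\sum_{i=1}^n\bigl(\widehat\phi(Y'_i)-\phi(Y'_i)\bigr)\ell(g,Z'_i).
$$
Here the weight discrepancy $\widehat\phi(Y'_i)-\phi(Y'_i)$ does not depend on $g$ and takes only the two values indexed by $\sgn(Y'_i)$. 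Since $\ell(g,\cdot)\in[0,1]$ and $\frac1n\sum_{i=1}^n\ell(g,Z'_i)\le 1$ for every $g$, the supremum factors out at once:
$$
\sup_{g\in\mathcal{G}}\left\vert\widetilde{\cR}_{\widehat{w}^*,n}(g)-\widetilde{\cR}_{w^*,n}(g)\right\vert\le\max_{y\in\{-1,+1\}}\left\vert\widehat\phi(y)-\phi(y)\right\vert.
$$
In particular, no complexity control (Rademacher averages, {\sc VC} dimension) enters this lemma.

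The only remaining randomness is the empirical positive rate $\widehat{p}'\defeq n'_+/n$, an average of i.i.d. $\{0,1\}$-valued variables of mean $p'$. The second step is thus a single two-sided Hoeffding inequality: with probability at least $1-\delta$, $\vert\widehat{p}'-p'\vert\le\sqrt{\log(2/\delta)/(2n)}$, the $\log(2/\delta)$ accounting for both tails so that no union bound is needed.

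The hard part will be controlling the random denominators of $\widehat\phi$, which carries $n'_+=n\widehat{p}'$ and $n'_-=n(1-\widehat{p}')$; a careless bound on $\widehat\phi(y)-\phi(y)$ diverges when $\widehat{p}'$ or $1-\widehat{p}'$ is near $0$ (and $\widehat\phi$ is not even well defined when $n'_+=0$). This is exactly where the hypotheses enter: the condition $n\ge 2\log(2/\delta)/\varepsilon^2$ forces $\sqrt{\log(2/\delta)/(2n)}\le\varepsilon/2$, so on the Hoeffding event $\vert\widehat{p}'-p'\vert\le\varepsilon/2$, whence $\widehat{p}'>\varepsilon/2$ and $1-\widehat{p}'>\varepsilon/2$, while $p',1-p'>\varepsilon$ by assumption.

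It then remains to make the elementary algebra explicit. Writing $\widehat\phi(+1)-\phi(+1)=p(p'-\widehat{p}')/(\widehat{p}'p')$ and $\widehat\phi(-1)-\phi(-1)=(1-p)(\widehat{p}'-p')/((1-\widehat{p}')(1-p'))$, then using $p,1-p\le 1$ together with the denominator lower bounds $\widehat{p}'p'>\varepsilon^2/2$ and $(1-\widehat{p}')(1-p')>\varepsilon^2/2$, I get
$$
\max_{y\in\{-1,+1\}}\left\vert\widehat\phi(y)-\phi(y)\right\vert\le\frac{2}{\varepsilon^2}\vert\widehat{p}'-p'\vert.
$$
Chaining this with the Hoeffding bound on $\vert\widehat{p}'-p'\vert$ gives the claimed inequality; the only genuinely delicate point is the bookkeeping that keeps the denominators bounded away from zero, everything else being routine.
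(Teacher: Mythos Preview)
Your argument is correct and in fact a bit more elementary than the paper's. The paper establishes the bound by applying the second-order Taylor expansion $1/x=1/a-(x-a)/a^2+(x-a)^2/(xa^2)$ to both $n/n'_+$ and $n/n'_-$, producing a first-order term of size $\vert n'_+/n-p'\vert/\varepsilon^2$ plus a quadratic remainder that must be controlled via the random denominators $n'_\pm/n$; the latter is where the condition $n\ge 2\log(2/\delta)/\varepsilon^2$ enters, and combining the two pieces yields the factor $2/\varepsilon^2$. You bypass this linearization entirely by bounding the whole difference by $\max_y\vert\widehat\phi(y)-\phi(y)\vert$ (using $0\le\ell\le 1$ to factor the supremum over $\mathcal{G}$ out) and then applying the exact identity $1/\widehat{p}'-1/p'=(p'-\widehat{p}')/(\widehat{p}'p')$, which gives the $2/\varepsilon^2$ factor in one step once the Hoeffding event is intersected with $p'\in(\varepsilon,1-\varepsilon)$. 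Both proofs rest on the same Hoeffding bound and the same margin argument $\widehat{p}',1-\widehat{p}'>\varepsilon/2$; the paper's route is slightly heavier here but is presented as a reusable ``linearization technique'' that it reapplies verbatim in the stratified and PU settings (Lemmas~\ref{lem:approx2}--\ref{lem:approx3}), which is presumably why the authors chose it.
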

See the Appendix for the technical proof. Consequently, minimizing \eqref{eq:emp_weighted_risk} nearly boils down to minimizing \eqref{eq:th_weight_risk}. Combining Lemmas \ref{lem:approx1} and \ref{ERM1}, we immediately get the generalization bound stated in the result below.
\begin{corollary}\label{cor1}
Suppose that the hypotheses of Lemma \ref{lem:approx1} are fulfilled. Let $\widetilde{g}_n$ be any minimizer of
$\widetilde{\cR}_{\widehat{w}^*,n}$ over class $\mathcal{G}$. We
have with probability at least $1- \delta$:
\begin{equation*}
    \cR_P (\widetilde{g}_n) - \inf_{g\in \mathcal{G}}\cR_P(g)
    \leq \frac{2\max(p, 1-p) }{\varepsilon}
    \left( 2 \mathbb{E}[R'_n(\mathcal{G})  ]     +\sqrt{\frac{2\log(2/\delta)}{n}} \right)
+ \frac{4}{\varepsilon^2} \sqrt{\frac{\log (4/\delta)}{2n}},
\end{equation*}
as soon as $n\geq 2\log(4/\delta)/\varepsilon^2$; where $R'_n(\mathcal{G})=(1/n)\mathbb{E}_{\mathbf{\sigma}}[\sup_{g\in \mathcal{G}}\vert \sum_{i=1}^n\sigma_i \mathbb{I}\{g(X'_i)\neq Y'_i\} \vert]$.
\end{corollary}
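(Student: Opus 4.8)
The plan is to bound the excess $\cR_P$-risk of $\widetilde{g}_n$ by a single uniform deviation and then split that deviation into the two pieces already controlled by Lemmas~\ref{lem:approx1} and \ref{ERM1}. First I would reproduce the elementary argument underlying Lemma~\ref{ERM1}: since $\widetilde{g}_n$ minimizes $\widetilde{\cR}_{\widehat{w}^*,n}$ over $\mathcal{G}$, the standard comparison between an empirical minimizer and a (near-)minimizer $g^*$ of $\cR_P$ yields
$$
\cR_P(\widetilde{g}_n) - \inf_{g\in\mathcal{G}}\cR_P(g) \leq 2\sup_{g\in\mathcal{G}}\left\vert \widetilde{\cR}_{\widehat{w}^*,n}(g) - \cR_P(g) \right\vert,
$$
where, if the infimum is not attained, one takes $g^*$ within $\eta$ of it and lets $\eta\downarrow 0$.

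Next I would insert the ideally weighted risk and apply the triangle inequality,
$$
\sup_{g\in\mathcal{G}}\left\vert \widetilde{\cR}_{\widehat{w}^*,n}(g) - \cR_P(g) \right\vert \leq \sup_{g\in\mathcal{G}} \left\vert \widetilde{\cR}_{\widehat{w}^*,n}(g) - \widetilde{\cR}_{w^*,n}(g)\right\vert + \sup_{g\in\mathcal{G}}\left\vert \widetilde{\cR}_{w^*,n}(g) - \cR_P(g)\right\vert.
$$
The first summand is precisely the quantity bounded in Lemma~\ref{lem:approx1}, which I would invoke with confidence parameter $\delta/2$; this is the origin both of the constraint $n\geq 2\log(4/\delta)/\varepsilon^2$ and of the term $\tfrac{4}{\varepsilon^2}\sqrt{\log(4/\delta)/(2n)}$ (the factor $2$ from the comparison step doubling the $2/\varepsilon^2$ of Lemma~\ref{lem:approx1}). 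For the second summand I would use that the ideal weights make $\widetilde{\cR}_{w^*,n}$ unbiased, so $\cR_P(g)=\mathbb{E}[\widetilde{\cR}_{w^*,n}(g)]$, which identifies it with the empirical-process deviation controlled inside the proof of Lemma~\ref{ERM1}. Applying that concentration estimate with confidence $\delta/2$, with $\sup_{(g,z)}\ell(g,z)=1$ and with the explicit bound $\vert\vert\Phi\vert\vert_{\infty}=\max\!\big(p/p',(1-p)/(1-p')\big)\leq \max(p,1-p)/\varepsilon$ valid since $p'\in(\varepsilon,1-\varepsilon)$, gives exactly the first term $\tfrac{2\max(p,1-p)}{\varepsilon}\big(2\mathbb{E}[R'_n(\mathcal{G})]+\sqrt{2\log(2/\delta)/n}\big)$ of the stated inequality.

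Finally I would combine the two events by a union bound so that both estimates hold simultaneously with probability at least $1-\delta$, and then collect constants. The only point deserving care is that Lemma~\ref{ERM1} is phrased as a risk bound for the minimizer of the \emph{ideal} weighted risk rather than as a uniform deviation inequality; I would therefore not use it as a black box but reuse the supremum-deviation estimate established in its proof, together with the identity $\cR_P(\cdot)=\mathbb{E}[\widetilde{\cR}_{w^*,n}(\cdot)]$. Beyond this bookkeeping and the elementary bound on $\vert\vert\Phi\vert\vert_{\infty}$, the argument is a routine concatenation of the two preceding lemmas, so I do not anticipate any genuine difficulty.
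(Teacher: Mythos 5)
Your argument is correct and is essentially the paper's own proof: the same decomposition of the excess risk into the plug-in deviation $\sup_g\vert\widetilde{\cR}_{\widehat{w}^*,n}(g)-\widetilde{\cR}_{w^*,n}(g)\vert$ (Lemma~\ref{lem:approx1} at level $\delta/2$) and the ideal-weight empirical-process deviation $\sup_g\vert\widetilde{\cR}_{w^*,n}(g)-\cR_P(g)\vert$ (the concentration estimate from the proof of Lemma~\ref{ERM1} at level $\delta/2$, with $\vert\vert\Phi\vert\vert_\infty\leq\max(p,1-p)/\varepsilon$), combined by a union bound. Your remark that Lemma~\ref{ERM1} must be reused at the level of its internal supremum-deviation bound rather than as a stated excess-risk bound is a point the paper glosses over, and your bookkeeping of the constants is accurate.
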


Hence, some side information (\textit{i.e.} knowledge of parameter $p$) has permitted to weight the training data in order to build an empirical risk functional that approximates the target risk and to show that minimization of this risk estimate yields prediction rules with optimal (in the minimax sense) learning rates. The purpose of the subsequent analysis is to show that this remains true for more general problems. Observe in addition that the bound in Corollary \ref{cor1} deteriorates as $\varepsilon$ decays to zero: the method used here is not intended to solve the \textit{few shot} learning problem, where almost no training data with positive labels is available (\textit{i.e.} $p'\approx 0$). As shall be seen in subsection \ref{subsec:pu}, alternative estimators of the importance function must be considered in this situation.
\begin{remark}
Although the quantity \eqref{eq:emp_weighted_risk} can be viewed as a \textit{cost-sensitive} version of the empirical classification risk based on the $(X'_i,Y'_i)$'s (see \textit{e.g.} \cite{BHH06}), we point out that the goal pursued here is not  to achieve an appropriate trade-off between type I and type II errors in the $P'$ classification problem as in biometric applications for instance (\textit{i.e.} optimization of the $(F_+,F_-)$-{\sc ROC} curve at a specific point) but to transfer knowledge gained in analyzing the biased data drawn from $P'$ to the classification problem related to distribution $P$.
\end{remark}

\noindent {\bf Related work.} We point out that the natural idea of using weights in ERM problems that mimic those induced by the importance function has already been used in \cite{sugiyama2008direct} for \textit{covariate shift adaptation} problems (\textit{i.e.} supervised situations, where the conditional distribution of the output given the input information is the same in the training and test domains), when, in contrast to the framework considered here, a test sample is additionally available (a method for estimating directly the importance function based on Kullback-Leibler divergence minimization is proposed, avoiding estimation of the test density).
Importance sampling estimators have been also considered in \cite{garcke2014importance} in the setup of \textit{inductive transfer learning} (the tasks between source and target are different, regardless of the
similarities between source and target domains), where the authors have proposed two methods to approximate the importance function, among which one is again based on minimizing the Kullback-Leibler divergence between the two distributions.
In \cite{cortes2008sample}, the sample selection bias is assumed to be independent from the label, which is not true under our stratum-shift assumption or for the PU learning problem (see section \ref{sec:main}).
Lemma \ref{ERM1} assumes that the exact importance function is known, as does \cite{cortes2010learning}. The next section introduces new results for more realistic settings where it has to be learned from the data.

\section{Weighted Empirical Risk Minimization - Generalization Guarantees}
\label{sec:main}

Through two important and generic examples, relevant for many applications, we show that the approach sketched above can be applied to general situations, where appropriate auxiliary information on the target distribution is available, with generalization guarantees. 

\subsection{Statistical Learning from Biased Data in a Stratified Population}\label{sec:stratified_population}

A natural extension of the simplistic problem considered in section \ref{sec:background} is multiclass classification in a stratified population. The random labels $Y$ and $Y'$ are supposed to take their values in $\{1,\; \ldots,\; J\}$ say, with $J \geq 1$, and each labeled observation $(X,Y)$ belongs to a certain random stratum $S$ in $\{1,\; \ldots,\; K \}$ with $K\geq 1$. Again, the distribution $P$ of a random element $Z=(X,Y,S)$ may be described by the parameters $\{(p_{j,k},F_{j,k} ):\; 1 \leq j \leq J,\; 1 \leq k \leq K\}$ where $F_{j,k}$  is the conditional distribution of $X$ given $(Y,S)=(j,k)$ and $p_{j,k} = \mathbb{P}_{(X,Y,S)\sim P}\{Y=j, S=k\}$. Then, we have
$$
dP(x, y, s) =
\sum_{j=1}^J \sum_{k=1}^K \mathbb{I}\{y=j, s=k\} p_{j,k} dF_{j,k}(x),
$$
and considering a distribution $P'$ with $F_{j,k} \equiv F'_{j,k}$ but possibly different class-stratum probabilities $p'_{j,k}$, the likelihood function becomes
$$
    \frac{dP}{dP'}(x,y,s)=
    \sum_{j=1}^J \sum_{k=1}^K  \frac{p_{j,k}}{p'_{j,k}} \mathbb{I}\{ y=j,s=k\}
    \overset{def}{=} \phi(y,s).
$$
A more general framework can actually encompass this specific setup by
defining 'meta-strata' in $\{1,\; \ldots,\; J\}\times \{1,\; \ldots,\; K\}$.
Strata may often correspond to categorical input features in practice. The
formalism introduced below is more general and includes the example considered
in the preceding section, where strata are defined by labels.

\noindent {\bf Learning from biased stratified data.}
Consider a general mixture model, where distributions $P$ and $P'$ are stratified over
$K\ge 1$ strata.  Namely, $Z=(X, S)$ and $Z'=(X', S')$ with auxiliary random variables $S$
and $S'$ (the strata) valued in $\{1,\; \ldots,\; K\}$.
We place ourselves in a \textit{stratum-shift} context, assuming that the conditional distribution of $X$ given $S=k$ is the same as that
of $X'$ given $S'=k$, denoted by $F_k(dx)$, for any $k\in\{1,\; \ldots,\; K \}$.  However, stratum probabilities $p_k = \mathbb{P}(S=k)$ and $p'_k = \mathbb{P}(S'=k)$ may possibly be different.
In this setup, the likelihood function depends only on the strata and can be expressed in a very simple form, as follows:
$$
    \frac{dP}{dP'}(x,s)=
    \sum_{k=1}^K \mathbb{I}\{s=k\} \frac{p_k}{p'_k}\overset{def}{=} \phi(s).
$$
In this case, the ideally weighted empirical risk writes
$$
    \widetilde{\cR}_{w^*,n}(\theta)=
    \frac{1}{n} \sum_{i=1}^n
    \ell(\theta,Z'_i) \sum_{k=1}^K \mathbb{I}\{S_i'=k\} \frac{p_k}{p'_k}.
$$
If the strata probabilities $p_k$'s for the test distribution are known,
an empirical counterpart of the ideal empirical risk above is obtained by simply plugging estimates of the $p'_k$'s computed from
the training data:
\begin{equation} \label{eq:strata_reweighting}
    \widetilde{\cR}_{\widehat{w}^*,n}(\theta)=
    \sum_{i=1}^n  \ell(\theta, Z'_i)
    \sum_{k=1}^K \mathbb{I}\{S_i'=k\}
    \frac{p_k }{n'_k},
\end{equation}
with $n'_k = \sum_{i=1}^n \mathbb{I}\{S'_i=k\}$, $\widehat{w}_i^*=\widehat{\phi}(S'_i)$ and
$
\widehat{\phi}(s)= \sum_{k=1}^K \mathbb{I}\{s=k\}  n p_k/n'_k$.

A bound for the excess of risk
is given in Theorem \ref{th:excess_risk},
that can be viewed as a generalization of Corollary \ref{cor1}.

\begin{theorem}
  \label{th:excess_risk}
  Let $\varepsilon \in (0,1/2)$ and assume that $p'_k \in (\varepsilon, 1-\varepsilon)$ for $k=1, \ldots,K$. Let $\widetilde{\theta}^*_n$ be any minimizer of
  $\widetilde{\cR}_{\widehat{w}^*,n}$ as defined in \eqref{eq:strata_reweighting} over class $\Theta$. We
  have with probability at least $1- \delta$:
  \begin{equation*}
      \cR_P (\widetilde{\theta}^*_n) - \inf_{\theta\in \Theta}\cR_P(\theta)
      \leq \frac{2\max_k p_k }{\varepsilon}
      \left( 2  \mathbb{E}[R'_n(\mathcal{F})  ]  + L \sqrt{\frac{2\log(2/\delta)}{n}} \right)
+\frac{4 L}{\varepsilon^2} \sqrt{\frac{\log (4K/\delta)}{2n}},
\end{equation*}
  as soon as $n\geq 2\log(4K/\delta)/\epsilon^2$; where $R'_n(\mathcal{F})=(1/n)\mathbb{E}_{\mathbf{\sigma}}[\sup_{\theta\in \Theta}\vert \sum_{i=1}^n\sigma_i \ell(\theta, Z'_i) \vert]$, and the loss is bounded by $L=\sup_{(\theta, z) \in \Theta \times \mathcal{Z}} \ell(\theta, z)$.
\end{theorem}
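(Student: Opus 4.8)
The plan is to follow the same two-step template that produced Corollary \ref{cor1}, now with $K$ strata and a general bounded loss. First I would reduce the excess risk to a uniform deviation: for any minimizer $\widetilde{\theta}^*_n$ of $\widetilde{\cR}_{\widehat{w}^*,n}$ and any near-minimizer $\theta^*$ of $\cR_P$, the elementary chaining
$$
\cR_P(\widetilde{\theta}^*_n) - \inf_{\theta\in\Theta}\cR_P(\theta) \le 2\sup_{\theta\in\Theta}\left\vert \widetilde{\cR}_{\widehat{w}^*,n}(\theta) - \cR_P(\theta)\right\vert
$$
holds because the increment $\widetilde{\cR}_{\widehat{w}^*,n}(\widetilde{\theta}^*_n) - \widetilde{\cR}_{\widehat{w}^*,n}(\theta^*)$ is nonpositive by definition of $\widetilde{\theta}^*_n$. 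I would then insert the ideal weighted risk and use the triangle inequality to split the right-hand side into an \emph{estimation} term $2\sup_\theta\vert\widetilde{\cR}_{w^*,n}(\theta) - \cR_P(\theta)\vert$ and a \emph{plug-in} term $2\sup_\theta\vert\widetilde{\cR}_{\widehat{w}^*,n}(\theta) - \widetilde{\cR}_{w^*,n}(\theta)\vert$, to be controlled separately on two events whose intersection has probability at least $1-\delta$.

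For the estimation term, the ideal weights are $w_i^*=\phi(S'_i)$, so that $\Vert\Phi\Vert_\infty=\max_k p_k/p'_k \le \max_k p_k/\varepsilon$ under the assumption $p'_k>\varepsilon$. This term is exactly the quantity bounded inside the proof of Lemma \ref{ERM1}; invoking that lemma with confidence level $\delta/2$ and $\sup_{\theta,z}\ell=L$ yields directly $\tfrac{2\max_k p_k}{\varepsilon}\big(2\mathbb{E}[R'_n(\mathcal{F})] + L\sqrt{2\log(2/\delta)/n}\big)$, i.e.\ the first term of the claimed bound.

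The crux is the plug-in term, which generalizes Lemma \ref{lem:approx1}. Writing $\widehat{p}'_k=n'_k/n$ and pulling the loss out, one has uniformly in $\theta$
$$
\sup_{\theta\in\Theta}\left\vert \widetilde{\cR}_{\widehat{w}^*,n}(\theta) - \widetilde{\cR}_{w^*,n}(\theta)\right\vert = \sup_{\theta\in\Theta}\left\vert\frac1n\sum_{i=1}^n\big(\widehat{\phi}(S'_i)-\phi(S'_i)\big)\ell(\theta,Z'_i)\right\vert \le L\,\Vert\widehat{\phi}-\phi\Vert_\infty,
$$
with $\Vert\widehat{\phi}-\phi\Vert_\infty=\max_k\vert np_k/n'_k - p_k/p'_k\vert = \max_k \tfrac{p_k}{\widehat{p}'_k\,p'_k}\vert p'_k-\widehat{p}'_k\vert$. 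The main obstacle is precisely the empirical frequency $\widehat{p}'_k$ sitting in the denominator, which may be small or even vanish. I would therefore work on the event $E=\{\forall k,\ \vert\widehat{p}'_k-p'_k\vert\le t\}$ with $t=\sqrt{\log(4K/\delta)/(2n)}$, which has probability at least $1-\delta/2$ by Hoeffding's inequality applied to each $\widehat{p}'_k$ (a mean of i.i.d.\ Bernoulli variables) together with a union bound over the $K$ strata. The sample-size hypothesis $n\ge 2\log(4K/\delta)/\varepsilon^2$ forces $t\le\varepsilon/2$, so that on $E$ one has $\widehat{p}'_k>p'_k-\varepsilon/2>\varepsilon/2$, hence $\widehat{p}'_k p'_k>\varepsilon^2/2$; combined with $p_k\le 1$ this gives $\Vert\widehat{\phi}-\phi\Vert_\infty\le (2/\varepsilon^2)\,t$, and the plug-in term is at most $\tfrac{4L}{\varepsilon^2}\sqrt{\log(4K/\delta)/(2n)}$.

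Combining the two bounds on the intersection of the two events, which has probability at least $1-\delta$, yields the stated inequality. The only genuinely new work relative to Corollary \ref{cor1} is the simultaneous control of the $K$ empirical stratum frequencies away from zero, which is what produces the $\log(4K/\delta)$ factor and the sample-size requirement; everything else is a direct transcription of Lemmas \ref{ERM1} and \ref{lem:approx1}.
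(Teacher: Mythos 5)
Your proposal is correct and follows essentially the same route as the paper's: the same decomposition of the excess risk into a plug-in term and an ideal-weight deviation term, Lemma \ref{ERM1} with $\Vert\Phi\Vert_\infty\le\max_k p_k/\varepsilon$ for the latter, and Hoeffding's inequality plus a union bound over the $K$ strata --- with the sample-size condition keeping the empirical frequencies $n'_k/n$ above $\varepsilon/2$ --- for the former. The only difference is cosmetic: you bound $\vert 1/\widehat{p}'_k-1/p'_k\vert$ directly and take a maximum over $k$ (using $p_k\le 1$), whereas the paper's Lemma \ref{lem:approx2} expands each reciprocal via an exact second-order identity and sums over $k$ (using $\sum_k p_k=1$); both yield the same constants.
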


Just like in Corollary \ref{cor1}, the bound in Theorem \ref{th:excess_risk} explodes when $\varepsilon$ vanishes, which corresponds to the situation where a stratum $k\in\{1,\dots,K\}$ is very poorly represented in the training data, \textit{i.e.} when $p'_k <<p_k$. Again, as highlighted by the experiments carried out, reweighting the losses in a frequentist (ERM) approach guarantees good generalization properties in a specific setup only, where the training information, though biased, is sufficiently informative.

\subsection{Positive-Unlabeled Learning}\label{subsec:pu}

Relaxing the \textit{stratum-shift} assumption made in the previous subsection, the importance function becomes more complex and writes:
$$
    \Phi(x, s) = \frac{dP}{dP'}(x,s)=
    \sum_{k=1}^K \mathbb{I}\{s=k\} \frac{p_k}{p'_k} \frac{dF_k}{dF_k'}(x),
$$
where $F_k$ and $F_k'$ are respectively the conditional distributions of $X$ given $S=k$ and of $X'$ given $S'=k$.
The Positive-Unlabeled (PU) learning problem, which has recently been the subject of much attention (see \textit{e.g.} \cite{du2014analysis}, \cite{DuPlessis2015}, \cite{kiryo2017positive}), provides a typical example of this situation. Re-using the notations introduced in section \ref{sec:background}, in the PU problem, the testing and training distributions $P$ and $P'$ are respectively described by the triplets $(p,F_+,F_-)$ and $(q,F_+,F)$,
where $F=pF_+ + (1-p)F_-$ is the marginal distribution of $X$. Hence, the objective pursued is to solve a binary classification task, based on the sole observation of a training sample pooling data with positive labels and unlabeled data, $q$ denoting the theoretical fraction of positive data among the dataset.
As noticed in \cite{du2014analysis} (see also \cite{DuPlessis2015}, \cite{kiryo2017positive}), the likelihood/importance function can be expressed in a simple manner, as follows:
\begin{equation}\label{eq:PU1}
\forall (x,y)\in\mathcal{X}\times \{-1,\; +1 \},\;\; \Phi(x,y) = \frac{p}{q}\mathbb{I}\{y=+1 \} + \frac{1}{1-q} \mathbb{I}\{y=-1 \} - \frac{p}{1-q} \frac{dF_+}{dF}(x) \mathbb{I}\{y=-1 \}.
\end{equation}
Based on an i.i.d. sample $(X'_1,Y'_1),\; \ldots,\; (X'_n,Y'_n)$ drawn from
$P'$ combined with the knowledge of $p$ (which can also be estimated from PU data, see e.g. \cite{du2014class}) and using that $F_-=(1/(1-p))(F-pF_+)$,
one may obtain estimators of $q$, $F_+$ and $F$ by computing
$n'_+/n=(1/n)\sum_{i=1}^n\mathbb{I}\{Y'_i=+1 \}$,
$\widehat{F}_+=(1/n'_+)\sum_{i=1}^n\mathbb{I}\{Y'_i=+1\}\delta_{X'_i}$ and
$\widehat{F} =(1/n'_-)\sum_{i=1}^n\mathbb{I}\{Y'_i=-1\}\delta_{X'_i}$. However,
plugging these quantities into \eqref{eq:PU1} do not permit to get a
statistical version of the importance function, insofar as the probability
measures $\widehat{F}_+$ and $\widehat{F}$ are mutually singular with
probability one, as soon as $F_+$ is continuous. Of course, as proposed in
\cite{du2014analysis}, one may use statistical methods (\textit{e.g.} kernel
smoothing) to build distribution estimators, that ensures absolute continuity
but are subject to the curse of dimensionality. However, WERM can still be
applied in this case, by observing that: $\forall g\in \mathcal{G}$,
\begin{equation}
\cR_P(g)=-p+\mathbb{E}_{P'}\left[ \frac{2p}{q}\mathbb{I}\{g(X')=-1,\; Y'=+1  \} + \frac{1}{1-q} \mathbb{I}\{g(X')=+1,\; Y'=-1  \} \right],
\end{equation}
which leads to the weighted empirical risk
\begin{equation}\label{eq:PUrisk}
\frac{2p}{n'_+}\sum_{i:Y_i'=+1}\mathbb{I}\{g(X_i')=-1  \} + \frac{1}{n'_-}\sum_{i:Y_i'=-1} \mathbb{I}\{g(X_i')=+1  \}.
\end{equation}
Minimization of \eqref{eq:PUrisk} yields rules $\widetilde{g}_n$ whose generalization ability regarding the binary problem related to $(p,F_+,F_-)$ can be guaranteed, as shown by the following result, the form of the weighted empirical risk in this case being quite similar to \eqref{eq:emp_weighted_risk}.
\begin{theorem}
\label{th:excess_risk_PU}
Let $\varepsilon\in(0,\; 1/2)$. Suppose that $q\in (\varepsilon,\; 1-\varepsilon)$. Let $\widetilde{g}_n$ be any minimizer of
the weighted empirical risk \eqref{eq:PUrisk} over class $\mathcal{G}$.
We have with probability at least $1- \delta$:
\begin{equation*}
    \cR_P (\widetilde{g}_n) - \inf_{g\in \mathcal{G}}\cR_P(g)
    \leq \frac{2\max(2 p, 1) }{\varepsilon}
    \left( 2 \mathbb{E}[R'_n(\mathcal{G})  ]     +\sqrt{\frac{2\log(2/\delta)}{n}} \right)
+ \frac{4(2p+1)}{\varepsilon^2} \sqrt{\frac{\log (4/\delta)}{2n}},
\end{equation*}
as soon as $n\geq 2\log(4/\delta)/\varepsilon^2$; where $R'_n(\mathcal{G})=(1/n)\mathbb{E}_{\mathbf{\sigma}}[\sup_{g\in \mathcal{G}}\vert \sum_{i=1}^n\sigma_i \mathbb{I}\{g(X'_i)\neq Y'_i\} \vert]$.
\end{theorem}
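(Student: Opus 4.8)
The plan is to reproduce, in this PU setting, the two-step structure behind Corollary \ref{cor1}: split the excess risk of the empirical minimizer $\widetilde{g}_n$ into an ERM-deviation term, controlled by Lemma \ref{ERM1}, and a weight-approximation term, controlled exactly as in Lemma \ref{lem:approx1}. First I would introduce the \emph{ideal} reweighted risk obtained by using the true fraction $q$ rather than its estimate $\widehat{q}=n'_+/n$. Using $\mathbb{I}\{g(x)=-1,y=+1\}=\mathbb{I}\{y=+1\}\mathbb{I}\{g(x)\neq y\}$ and the analogous identity for the other term, this ideal risk reads
$$
\widetilde{\cR}_{w^*,n}(g)=-p+\frac{1}{n}\sum_{i=1}^n c(Y'_i)\,\mathbb{I}\{g(X'_i)\neq Y'_i\},\qquad c(y)=\frac{2p}{q}\mathbb{I}\{y=+1\}+\frac{1}{1-q}\mathbb{I}\{y=-1\}.
$$
Since \eqref{eq:PUrisk} equals $\widetilde{\cR}_{\widehat{w}^*,n}(g)+p$ with $\widehat{q}$ substituted for $q$, minimizing \eqref{eq:PUrisk} is the same as minimizing $\widetilde{\cR}_{\widehat{w}^*,n}$, and the usual comparison for empirical minimizers gives
$$
\cR_P(\widetilde{g}_n)-\inf_{g\in\mathcal{G}}\cR_P(g)\le 2\sup_{g\in\mathcal{G}}\left|\widetilde{\cR}_{w^*,n}(g)-\cR_P(g)\right|+2\sup_{g\in\mathcal{G}}\left|\widetilde{\cR}_{\widehat{w}^*,n}(g)-\widetilde{\cR}_{w^*,n}(g)\right|.
$$

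For the first term, the crucial observation is that, by the risk reformulation recalled just before the statement, $\mathbb{E}_{P'}[\widetilde{\cR}_{w^*,n}(g)]=\cR_P(g)$: even though $c(y)$ is \emph{not} the (singular) likelihood ratio \eqref{eq:PU1}, $\widetilde{\cR}_{w^*,n}$ remains an unbiased empirical average of the bounded functional $-p+c(Y')\mathbb{I}\{g(X')\neq Y'\}$, with $\|c\|_\infty\le \max(2p,1)/\varepsilon$ because $q\in(\varepsilon,1-\varepsilon)$. As the proof of Lemma \ref{ERM1} only invokes unbiasedness and a uniform bound on the weights, it applies verbatim with $c$ playing the role of $\Phi$ and $\sup\ell=1$, yielding (on an event of probability $1-\delta/2$) precisely the first term $\tfrac{2\max(2p,1)}{\varepsilon}(2\mathbb{E}[R'_n(\mathcal{G})]+\sqrt{2\log(2/\delta)/n})$.

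For the second (approximation) term I would proceed as in Lemma \ref{lem:approx1}. Denoting by $A_n(g),B_n(g)\in[0,1]$ the two normalized error counts, the difference equals $2p(1/\widehat{q}-1/q)A_n(g)+(1/(1-\widehat{q})-1/(1-q))B_n(g)$. A single Hoeffding bound gives $|\widehat{q}-q|\le\sqrt{\log(4/\delta)/(2n)}$ with probability $1-\delta/2$; the hypothesis $n\ge 2\log(4/\delta)/\varepsilon^2$ forces $|\widehat{q}-q|\le\varepsilon/2$, hence $\widehat{q}$ and $1-\widehat{q}$ stay above $\varepsilon/2$, so $\widehat{q}\,q$ and $(1-\widehat{q})(1-q)$ both exceed $\varepsilon^2/2$. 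This bounds the supremum by $\tfrac{2(2p+1)}{\varepsilon^2}\sqrt{\log(4/\delta)/(2n)}$, and the leading factor $2$ produces the stated $\tfrac{4(2p+1)}{\varepsilon^2}\sqrt{\log(4/\delta)/(2n)}$. A union bound over the two events of probability $1-\delta/2$ then yields the claim.

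The step to get right is the first one: one must resist plugging the literal importance function \eqref{eq:PU1}, whose empirical version is ill-defined since $\widehat{F}_+$ and $\widehat{F}$ are mutually singular, and instead recognize that the reformulated risk is an honest unbiased average of a \emph{bounded} weighted loss, which is exactly what lets Lemma \ref{ERM1} be reused as a black box. Everything else is a routine transcription of the $q$-versus-$\widehat{q}$ control of Lemma \ref{lem:approx1}, the only care being to keep $\widehat{q}$ bounded away from $0$ and $1$ through the sample-size condition, which is the origin of the $1/\varepsilon^2$ factor.
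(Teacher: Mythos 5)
Your proposal is correct and follows essentially the same route as the paper: the same decomposition of the excess risk into a maximal-deviation term (handled by reusing Lemma \ref{ERM1} with the bounded weight $c(y)$, $\Vert c\Vert_\infty\le \max(2p,1)/\varepsilon$) plus a weight-approximation term, each at level $\delta/2$, combined by a union bound. The only cosmetic difference is that the paper's Lemma \ref{lem:approx3} controls $1/\widehat{q}-1/q$ via a second-order Taylor expansion of $x\mapsto 1/x$ while you compute the difference of reciprocals directly; both give the same constant $2(2p+1)/\varepsilon^2$ under the same sample-size condition.
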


\begin{remark}
Let $\eta(x)=\mathbb{P}\{Y=+1\mid X=x  \}$ denote the posterior probability and recall that $(dF_+/dF_-)(x)=((1-p)/p)(\eta(x)/(1-\eta(x))$. Observing that
\begin{equation}
  \label{eq:PhiPU}
\Phi(x,y) = \frac{p}{q}\mathbb{I}\{y=+1 \}+ \frac{1-\eta(x)}{1-q}\mathbb{I}\{y=-1 \},
\end{equation}
in the case when an estimate $\widehat{\eta}(x)$ of $\eta(x)$ is available, one can perform WERM using the empirical weight function
\begin{equation}
  \label{eq:phi_pu_eta}
\widehat{\Phi}(x,y) = \frac{np}{n'_+}\mathbb{I}\{y=+1 \}+ \frac{1-\widehat{\eta}(x)}{1-n'_+/n}\mathbb{I}\{y=-1 \}.
\end{equation}
A bound that describes how this approach generalizes, depending on the accuracy
of estimate $\widehat{\eta}$, can be easily established, for more details refer
to Theorem 3 in the Supplementary Material, where it is also discussed how to
exploit such formulas in order to design incremental WERM procedures.
\end{remark}

\subsection{Learning from Censored Data}
\label{subsec:censor}

Another important example of sample bias is the censorship setting where the learner has only
access to (right) censored targets $\min(Y', C')$ instead of $Y'$. Intuitively, this situation occurs when
$Y'$ is a duration/date, e.g. the date of death of a patient modeled by covariates $X'$,
and the study happens at a (random) date $C'$. Hence if $C'\le Y'$, then we know that the patient is still alive
at time $C'$ but the target time $Y'$ remains unknown.
This problem has been extensively studied (see e.g. \cite{fleming2011counting}, \cite{andersen2012statistical} and the references therein for the asymptotic theory and \cite{ausset2019empirical} for finite-time guarantees): we show here that it is an instance of WERM.
Formally, we respectively denote by $P$ and $P'$ the testing and training distributions of the r.v.'s $(X, \min(Y, C), \mathbb{I}\{Y\le C\})$ and $(X', \min(Y', C'), \mathbb{I}\{Y'\le C'\})$
both valued in $\mathbb{R}^d\times\mathbb{R}_+\times\{0, 1\}$ (with $Y, Y', C, C'$ all nonnegative r.v.'s) and such that the pairs $(X,Y)$ and $(X',Y')$ share the same distribution $Q$.
Moreover, $C>Y$ with probability $1$ (i.e. the testing data are never censored)
and $Y'$ and $C'$ are assumed to be conditionally independent given $X'$.
Hence, for all $(x,y,\delta)\in\mathbb{R}^d\times\mathbb{R}_+\times\{0, 1\}$:
$$
dP(x, y, \delta) = \delta dQ(x,y)
$$
and
$$
\delta dP'(x, y, \delta) = \delta \mathbb{P}(C'\ge y) d\mathbb{P}(X'=x, Y'=y | C'\ge y) = \delta S_{C'}(y|x) dQ(x,y),
$$
where $S_{C'}(y|x) = \mathbb{P}(C'\ge y | X'=x)$ denotes the conditional survival function of $C'$ given $X'$.
Then, the importance function is:
$$
\forall (x, y, \delta)\in\mathbb{R}^d\times\mathbb{R}_+\times\{0, 1\}, \quad \Phi(x, y, \delta) = \frac{dP}{dP'}(x, y, \delta) = \frac{\delta}{S_{C'}(y|x)}.
$$
In survival analysis, the ratio $\delta/S_{C'}(y|x)$ is called IPCW (inverse of the probability of censoring weight)
and $S_{C'}(y|x)$ can be estimated by using the Kaplan-Meier approach, see \cite{kaplan1958nonparametric}.
 
\section{Numerical Experiments}
\label{sec:num}

This section illustrates the impact of reweighting by the likelihood ratio on
classification performances, as a special case of the general strategy presented
in \Cref{sec:background}. A first simple illustration on known probability distributions
highlights the impact of the shapes of the distributions on the importance of
reweighting. This example illustrates in the infinite-sample case that
separable or almost separable data do not require reweighting, in contrast
to noisy data.  Since the distribution shapes are unknown for real data, we
infer that reweighting will have variable effectiveness, depending on the
dataset. This illustration is deferred to the Appendix.
We detail here a second experiment that uses the structure of
ImageNet to illustrate reweighting with a stratified population and strata
distribution bias or \emph{strata bias}.

We focus on the \textit{learning
from biased stratified data} setting introduced in \Cref{sec:stratified_population}
by leveraging the ImageNet Large Scale Visual Recognition Challenge
(ILSVRC); a well-known benchmark for the image classification task, see
\cite{DBLP:journals/corr/RussakovskyDSKSMHKKBBF14} for more details.

The challenge consists in
learning a classifier from 1.3 million training images spread out over 1,000
classes. Performance is evaluated using the validation dataset of 50,000
images of ILSVRC as our test dataset. ImageNet is an image database
organized according to the WordNet hierarchy, which groups nouns in sets
of related words called synsets. In that context, images are examples of
very precise nouns, e.g. \emph{flamingo}, which are contained in a larger
synset, e.g. \emph{bird}.

The impact of reweighting in presence of strata bias is illustrated on the ILSVRC
classification problem with broad significance synsets for strata. To do this,
we encode the data using deep neural networks. Specifically our encoding is the
flattened output of the last convolutional layer of the network ResNet50
introduced in \cite{DBLP:journals/corr/HeZRS15}. It was trained for
classification on the training dataset of ILSVRC. The encodings
$X_1, \dots, X_n$ belong to a 2,048-dimensional space.

A total of 33 strata are derived from a list of high-level categories provided
by ImageNet\footnote{\url{http://www.image-net.org/about-stats}}. The
construction of the strata is postponed to the Appendix. By default, strata
probabilities $p_k$ and $p_k'$ for $1 \le k \le K$ are equivalent between
training and testing datasets, meaning that reweighting by $\Phi$ would have
little to no effect. Since our testing data is the validation data of ILSVRC,
we have around $25$ times more training than testing data. Introducing a strata
bias parameter $0 \le \gamma \le 1$, we set the strata train probabilities such
that $p_k' = \gamma^{1 - \lfloor K/2 \rfloor/k} p_k$ before renormalization and
remove train instances so that the train set has the right distribution over
strata; see the Appendix for more details on the generation of strata bias.
When $\gamma$ is close to one, there is little to no strata bias. In contrast,
when $\gamma$ approaches $0$, strata bias is extreme.

\begin{figure}[!htb]
\minipage{0.35\textwidth}

  \includegraphics[width=\linewidth]{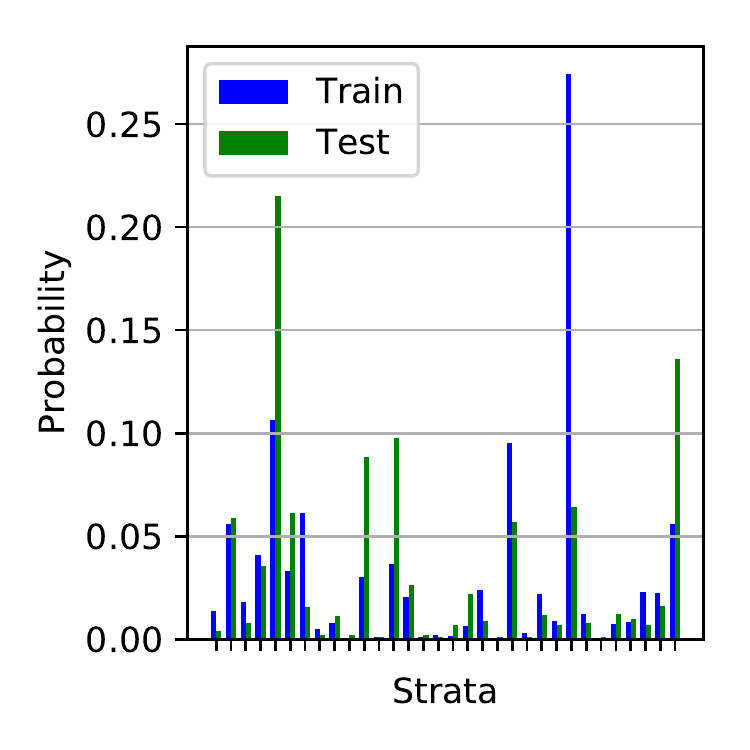}
\caption*{Comparison of $p_k$'s and $p_k'$'s.}\label{fig:prob_diff_ImageNet}
\endminipage\hfill
\minipage{0.6\textwidth}\center
    \begin{tabular}[t]{lccc}
	\toprule
	 Model & Reweighting & miss rate & top-5 error \\
	 \midrule
	 \multirow{4}{*}{Linear} & Unif. $\widehat{\Phi}=1$ & $0.344$ & $0.130$ \\
	    \cmidrule(lr){2-4}
	    & Strata  $\widehat{\Phi}$ & $ \mathbf{0.329}  $ & $\mathbf{0.120}  $ \\
	    \cmidrule(lr){2-4}
	    & \textcolor{gray}{Class $\widehat{\Phi}$} & \textcolor{gray}{ $0.328$ }
	    & \textcolor{gray}{ $0.119$ } \\
	    \cmidrule(lr){2-4}
	    & \textcolor{gray}{No bias} & \textcolor{gray}{ $0.297$ } &
	    \textcolor{gray}{ $0.102$ } \\
	    \midrule
	    \multirow{4}{*}{MLP} & Unif. $\widehat{\Phi} = 1$ & $0.371$ & $0.143$ \\
	    \cmidrule(lr){2-4}
	    & Strata $\widehat{\Phi}$ & $\mathbf{0.364}  $ & $\mathbf{0.138}  $ \\
	    \cmidrule(lr){2-4}
	    & \textcolor{gray}{Class $\widehat{\Phi}$} & \textcolor{gray}{ $0.363$ } &
	    \textcolor{gray}{ $0.138  $ } \\
	    \cmidrule(lr){2-4}
	    & \textcolor{gray}{No bias} & \textcolor{gray}{ $0.316  $ } &
	    \textcolor{gray}{ $0.111  $ } \\
\bottomrule
    \end{tabular}
    \caption*{Table of results.}\label{fig:prob_diff_ImageNet}
\endminipage
\caption{Results for the strata reweighting experiment with ImageNet.}\label{fig:ImageNet_res}
\end{figure}

\begin{figure}[!htb]
\minipage{0.33\textwidth}

    \includegraphics[width=\linewidth]{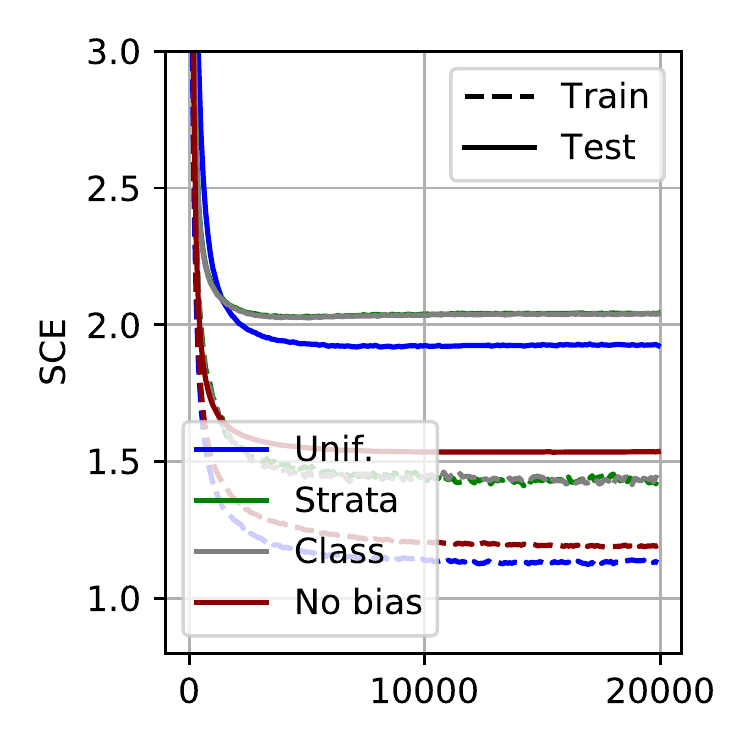}
\caption*{Dynamics for the SCE.}\label{fig:imgnetlin_quant_cost}
\endminipage
\hfill
\minipage{0.33\textwidth}

    \includegraphics[width=\linewidth]{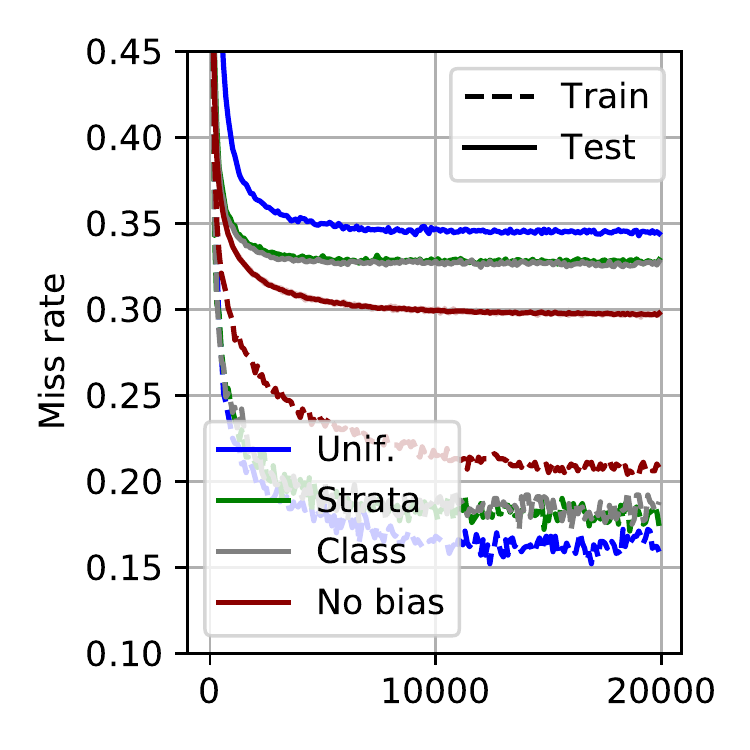}
\caption*{Dynamics for the miss rate.}\label{fig:imgnetlin_quant_acc}
\endminipage
\hfill
\minipage{0.33\textwidth}

    \includegraphics[width=\linewidth]{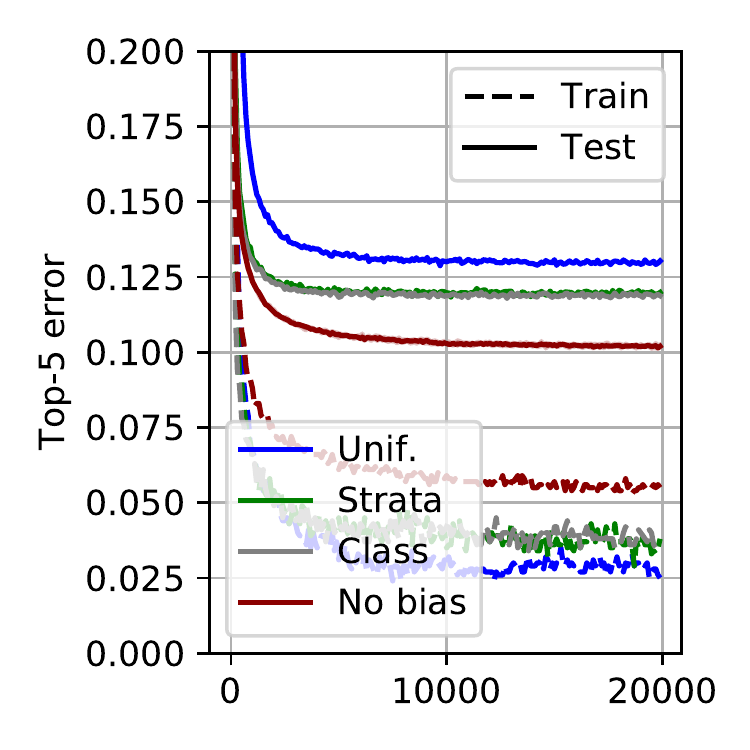}
\caption*{Dynamics for the top-5 error.}\label{fig:imgnetlin_quant_topk}
\endminipage

\caption{Dynamics for the linear model for the strata reweighting experiment with ImageNet.}\label{fig:ImageNet_lin_dynamics}
\end{figure}

\begin{figure}[!htb]
\minipage{0.33\textwidth}

    \includegraphics[width=\linewidth]{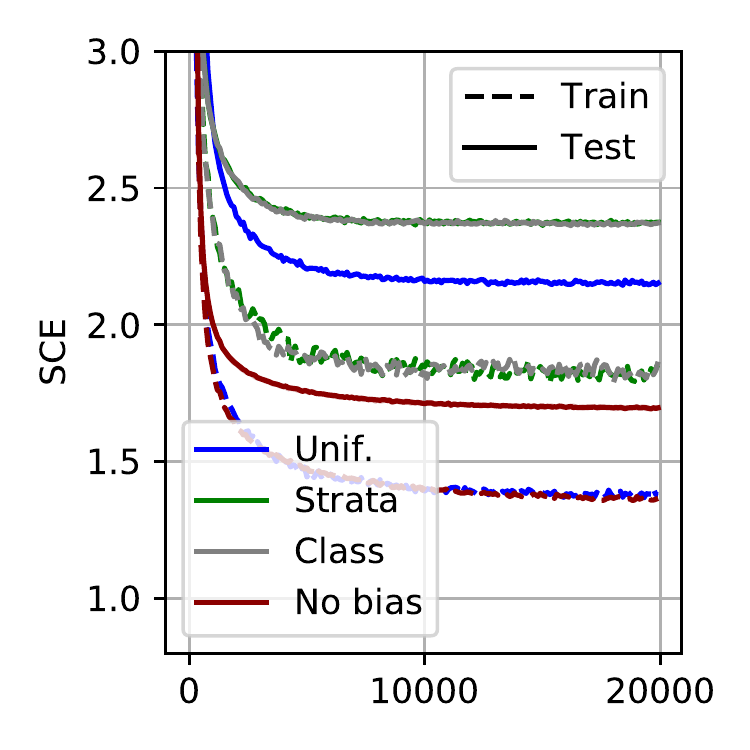}
\caption*{Dynamics for the SCE.}\label{fig:imgnetmlp_quant_cost}
\endminipage
\hfill
\minipage{0.33\textwidth}

    \includegraphics[width=\linewidth]{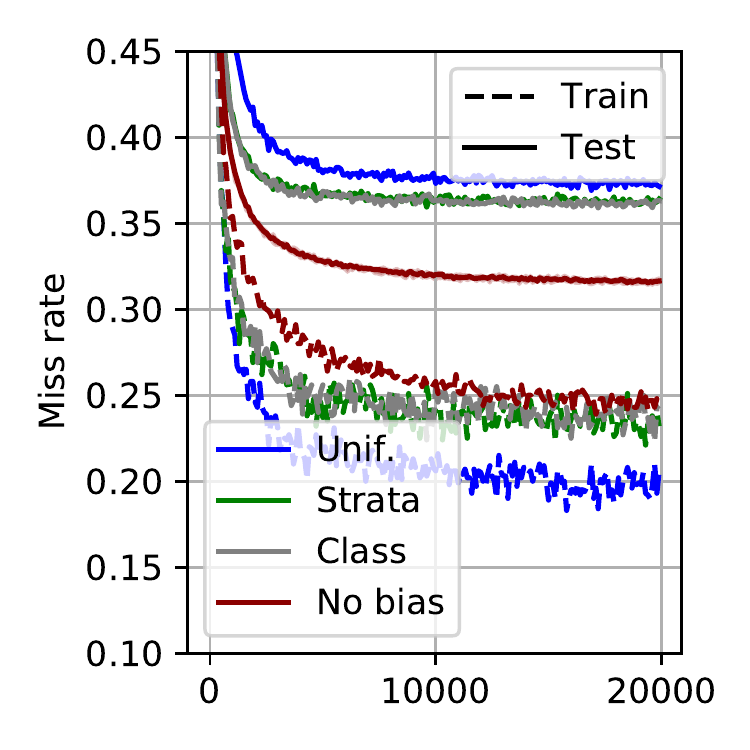}
\caption*{Dynamics for the miss rate.}\label{fig:imgnetmlp_quant_acc}
\endminipage
\hfill
\minipage{0.33\textwidth}

    \includegraphics[width=\linewidth]{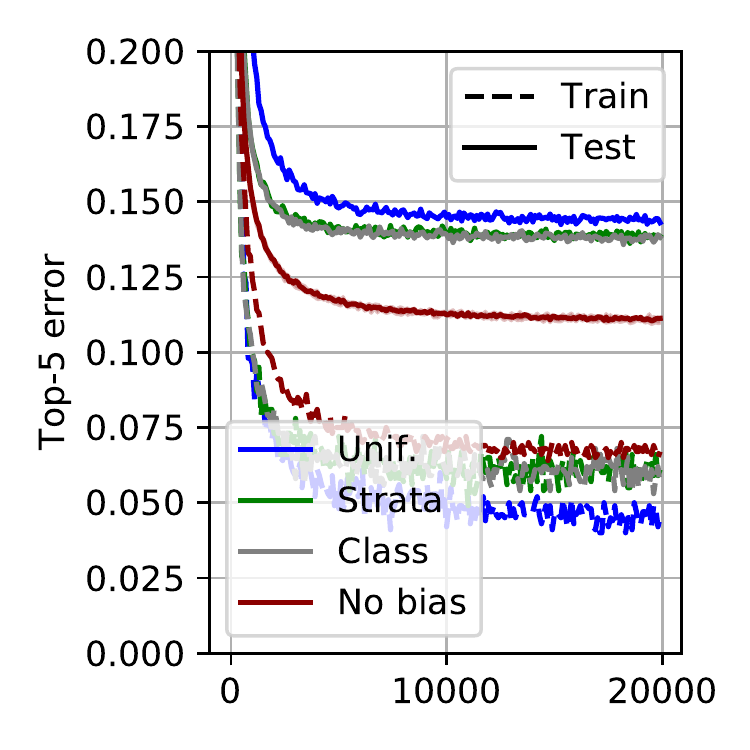}
\caption*{Dynamics for the top-5 error.}\label{fig:imgnetmlp_quant_topk}
\endminipage

\caption{Dynamics for the MLP model for the strata reweighting experiment with ImageNet.}\label{fig:ImageNet_mlp_dynamics}
\end{figure}

The models used are a linear model and a multilayer perceptron (MLP) with one
hidden layer; more details are given in the Appendix. We report
better performance when reweighting using the strata information, compared to
the case where the strata information is ignored, see \cref{fig:ImageNet_res}.
For comparison, we added two reference experiments: one which reweights the
train instances by the class probabilities, which we do not know in a
stratified population experiment, and one with more data and no strata bias
because it uses all of the ILSVRC train data. The dominance of the linear
model over the MLP can be justified by the much higher number of parameters
to estimate  
\section{Conclusion}\label{sec:conclusion}
In this paper, we have considered specific transfer learning problems, where
the distribution of the test data $P$ differs from that of the training data,
$P'$, and is absolutely continuous with respect to the latter. This setup
encompasses many situations in practice, where the data acquisition process is
not perfectly controlled. In this situation, a simple change of measure shows
that the target risk may be viewed as the expectation of a weighted version of
the basic empirical risk, with ideal weights given by the importance function
$\Phi=dP/dP'$, unknown in practice. Throughout this article, we have shown that,
in statistical learning problems corresponding to a wide variety of practical
applications, these ideal weights can be replaced by statistical versions based
solely on the training data combined with very simple information about the
target distribution. 
The generalisation capacity of rules learnt from biased training data by
minimization of the weighted empirical risk has been established, 
with learning bounds.
These theoretical results are also illustrated with
several numerical experiments.
 
\newpage

\bibliographystyle{apalike}

\newpage

\section*{Appendix - Technical Proofs}

Here we detail the proofs of the results stated in the article and discuss their connection with related work.

\subsection*{Proof of Lemma \ref{ERM1}}
Let $\delta\in (0,1)$. Applying the classic maximal deviation bound stated in Theorem 3.2 of \cite{Boucheron2005} to the bounded class $\mathcal{K}=\{z\in\mathcal{Z}\mapsto \Phi(z)l(\theta, z):\;\; \theta\in \Theta  \}$, we obtain that, with probability at least $1-\delta$:
$$
\sup_{\theta\in\Theta}\left\vert \widetilde{\cR}_{w^*,n}(\theta)- \mathbb{E}\left[ \widetilde{\cR}_{w^*,n}(\theta) \right]  \right\vert \leq 2\mathbb{E}\left[R'_n(\mathcal{K})  \right]+\vert\vert \Phi\vert\vert_{\infty}\sup_{(\theta,z)\in \Theta\times \mathcal{Z}}\left\vert \ell(\theta, z) \right\vert \sqrt{\frac{2\log (1/\delta)}{n}}.
$$
In addition, by virtue of the contraction principle, we have $R'_n(\mathcal{K})\leq \vert\vert \Phi\vert\vert_{\infty} R'_n(\mathcal{F})$ almost-surely. The desired result can be thus deduced from the bound above combined with the classic bound
$$
 \cR_P (\tilde{\theta}^*_n) - \min_{\theta\in \Theta}\cR_P(\theta)\leq 2\sup_{\theta\in\Theta}\left\vert \widetilde{\cR}_{w^*,n}(\theta)- \mathbb{E}\left[ \widetilde{\cR}_{w^*,n}(\theta) \right]  \right\vert.
$$

\subsection*{Proof of Lemma \ref{lem:approx1}}
Apply twice the Taylor expansion
$$
\frac{1}{x}=\frac{1}{a}-\frac{x-a}{a^2}+\frac{(x-a)^2}{xa^2},
$$
so as to get
\begin{eqnarray*}
\frac{1}{n_+'/n}&=&\frac{1}{p'}-\frac{n_+'/n-p'}{p'^2}+\frac{(n_+'/n-p')^2}{p'^2 n'_+/n},\\
\frac{1}{n_-'/n}&=&\frac{1}{1-p'}-\frac{n_-'/n-1+p'}{(1-p')^2}+\frac{(n_-'/n-1+p')^2}{(1-p')^2n'_-/n}.
\end{eqnarray*}
This yields the decomposition
\begin{multline*}
\widetilde{\cR}_{\widehat{w}^*,n}(g)-\widetilde{\cR}_{w^*,n}(g)=-\frac{p}{p'^2}\left(\frac{n'_+}{n}-p'\right)\frac{1}{n}\sum_{i=1}^n\mathbb{I}\{g(X'_i)=-1,\; Y'_i=+1 \}\\
-\frac{1-p}{(1-p')^2}\left(\frac{n'_-}{n}-1+p'\right)\frac{1}{n}\sum_{i=1}^n\mathbb{I}\{g(X'_i)=+1,\; Y'_i=-1 \}+\frac{p(n_+'/n-p')^2}{p'^2n'_+/n}\frac{1}{n}\sum_{i=1}^n\mathbb{I}\{g(X'_i)=-1,\; Y'_i=+1 \}\\
+\frac{(1-p)(n_-'/n-1+p')^2}{(1-p')^2n'_-/n}\frac{1}{n}\sum_{i=1}^n\mathbb{I}\{g(X'_i)=+1,\; Y'_i=-1 \}.
\end{multline*}
We deduce that
\begin{equation*}
\left\vert
\widetilde{\cR}_{\widehat{w}^*,n}(g)-\widetilde{\cR}_{w^*,n}(g)
\right\vert \leq \frac{\vert n'_+/n-p'\vert}{\varepsilon^2}\left( 1+\vert n'_+/n-p'\vert \left( \frac{p}{n'_+/n}+\frac{1-p}{1-n'_+/n} \right) \right).
\end{equation*}
By virtue of Hoeffding inequality, we obtain that, for any $\delta\in (0,1)$, we have with probability larger than $1-\delta$:
$$
\left\vert n'_+/n-p'\right\vert \leq \sqrt{\frac{\log(2/\delta)}{2n}},
$$
so that, in particular, $\min\{ n'_+/n,\; 1- n'_+/n\}\geq \varepsilon-\sqrt{\log(2/\delta)/(2n)}$. This yields the desired result.

\subsection*{Proof of Corollary \ref{cor1} }
Observe first that $\vert\vert \Phi \vert\vert_{\infty}\leq \max(p,\; 1-p)/\varepsilon$ and
$$
\cR_P(\widetilde{g}_n)-\inf_{g\in\mathcal{G}}\cR_P(g)\leq 2\sup_{g\in \mathcal{G}}\left\vert  \widetilde{\cR}_{\widehat{w}^*,n}(g)-    \widetilde{\cR}_{w^*,n}(g) \right\vert + 2\sup_{g\in \mathcal{G}}\left\vert  \widetilde{\cR}_{w^*,n}(g)-    \cR_{P}(g) \right\vert.
$$
The result then directly follows from the application of Lemmas \ref{ERM1}-\ref{lem:approx1} combined with the union bound.

\subsection*{Proof of Theorem \ref{th:excess_risk} }

Observe first that $\vert\vert \Phi \vert\vert_{\infty}\leq \max_k p_k/\varepsilon$ and
$$
\cR_P(\widetilde{\theta}^*_n)-\inf_{\theta\in\Theta}\cR_P(\theta)\leq 2\sup_{\theta\in \Theta}\left\vert  \widetilde{\cR}_{\widehat{w}^*,n}(\theta)-    \widetilde{\cR}_{w^*,n}(\theta) \right\vert + 2\sup_{\theta\in \Theta}\left\vert  \widetilde{\cR}_{w^*,n}(\theta)-    \cR_{P}(\theta) \right\vert.
$$
The result then directly follows from the application of Lemmas \ref{ERM1}-\ref{lem:approx2} combined with the union bound.

\begin{lemma}
  \label{lem:approx2}
  Let $\varepsilon\in(0,\; 1/2)$. Suppose that $p'_k\in (\varepsilon,\; 1-\varepsilon)$ for $k\in\{1,\; \ldots,\; K\}$. For any $\delta\in (0,1)$, we have with probability larger than $1-\delta$:
  $$
  \sup_{\theta \in \Theta}\left\vert  \widetilde{\cR}_{\widehat{w}^*,n}(\theta)-    \widetilde{\cR}_{w^*,n}(\theta) \right\vert \leq \frac{2L}{\varepsilon^2} \sqrt{\frac{\log (2K/\delta)}{2n}},
  $$
  as soon as $n\geq 2\log(2K/\delta)/\varepsilon^2$, where $L=\sup_{(\theta, z) \in \Theta \times \mathcal{Z}} \ell(\theta, z)$.
\end{lemma}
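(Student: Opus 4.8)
The plan is to mimic, stratum by stratum, the two‑cell argument behind Lemma~\ref{lem:approx1}. Writing $\widehat{p}_k = n'_k/n$ and introducing the (unnormalized) per‑stratum loss averages $A_k(\theta) = (1/n)\sum_{i=1}^n \mathbb{I}\{S'_i=k\}\,\ell(\theta, Z'_i)$, I would first regroup both risks by stratum, obtaining $\widetilde{\cR}_{\widehat{w}^*,n}(\theta) = \sum_{k=1}^K p_k A_k(\theta)/\widehat{p}_k$ and $\widetilde{\cR}_{w^*,n}(\theta) = \sum_{k=1}^K p_k A_k(\theta)/p'_k$, since $n'_k = n\widehat{p}_k$. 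This yields the exact decomposition
$$
\widetilde{\cR}_{\widehat{w}^*,n}(\theta) - \widetilde{\cR}_{w^*,n}(\theta) = \sum_{k=1}^K p_k A_k(\theta)\left(\frac{1}{\widehat{p}_k} - \frac{1}{p'_k}\right),
$$
which plays the role of the two‑term expression appearing in the proof of Lemma~\ref{lem:approx1}.

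Next I would apply the same second‑order Taylor expansion $1/x = 1/a - (x-a)/a^2 + (x-a)^2/(x a^2)$ with $x=\widehat{p}_k$ and $a=p'_k$ to each summand, splitting the difference into a first‑order part $-\sum_k p_k A_k(\theta)(\widehat{p}_k - p'_k)/(p'_k)^2$ and a remainder $\sum_k p_k A_k(\theta)(\widehat{p}_k - p'_k)^2/(\widehat{p}_k (p'_k)^2)$. The crucial deterministic estimate is the uniform bound $0 \le A_k(\theta) \le \widehat{p}_k L$, valid for every $\theta$ because the stratum‑$k$ sum contains exactly $n'_k$ terms, each at most $L$. Combined with $p_k/(p'_k)^2 \le 1/\varepsilon^2$ (from $p_k\le 1$ and $p'_k > \varepsilon$) and with $\sum_k p_k \widehat{p}_k \le \sum_k p_k = 1$, this gives, uniformly in $\theta$, the first‑order bound $(L/\varepsilon^2)\max_k|\widehat{p}_k - p'_k|$; for the remainder the same $A_k(\theta)\le \widehat{p}_k L$ cancels the offending $\widehat{p}_k$ in the denominator and leaves the bound $(L/\varepsilon^2)\max_k|\widehat{p}_k - p'_k|^2$.

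It then remains to control the stratum frequencies. Each $\widehat{p}_k$ is an empirical mean of the i.i.d. Bernoulli$(p'_k)$ variables $\mathbb{I}\{S'_i=k\}$, so Hoeffding's inequality gives $|\widehat{p}_k - p'_k| \le \sqrt{\log(2/\delta')/(2n)}$ with probability $1-\delta'$, and a union bound over the $K$ strata with $\delta'=\delta/K$ yields $\max_k|\widehat{p}_k - p'_k| \le t := \sqrt{\log(2K/\delta)/(2n)}$ with probability at least $1-\delta$. On this event the assumption $n \ge 2\log(2K/\delta)/\varepsilon^2$ forces $t \le \varepsilon/2 \le 1$, which simultaneously guarantees $\widehat{p}_k \ge p'_k - t > \varepsilon/2 > 0$ (so the Taylor expansion is licit and all denominators are safely positive) and yields $t + t^2 \le 2t$. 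Adding the two parts gives $\sup_{\theta}|\widetilde{\cR}_{\widehat{w}^*,n}(\theta) - \widetilde{\cR}_{w^*,n}(\theta)| \le (L/\varepsilon^2)(t+t^2) \le (2L/\varepsilon^2)\sqrt{\log(2K/\delta)/(2n)}$, which is the announced bound.

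The only genuinely delicate point is the bookkeeping in the second step: one must arrange the remainder so that the $\widehat{p}_k$ in its denominator is absorbed by the uniform estimate $A_k(\theta)\le \widehat{p}_k L$, and must take the $\max_k$ outside the sum before invoking $\sum_k p_k = 1$, so that the $\log(2K/\delta)$ from the union bound enters only once rather than being multiplied by $K$. Once this cancellation and the positivity of the $\widehat{p}_k$'s on the good event are secured, everything else is a routine combination of Taylor's formula, Hoeffding's inequality and the union bound, entirely parallel to the proof of Lemma~\ref{lem:approx1}.
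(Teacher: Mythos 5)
Your proof is correct and follows essentially the same route as the paper's: the same second-order Taylor expansion of $1/\widehat{p}_k$ about $1/p'_k$, the same first-order-plus-remainder decomposition, and Hoeffding's inequality with a union bound over the $K$ strata. The only (harmless) difference is in absorbing the $1/\widehat{p}_k$ in the remainder: you use the refined deterministic bound $A_k(\theta)\le \widehat{p}_k L$, whereas the paper bounds the per-stratum average by $L$ and instead lower-bounds $n'_k/n\ge \varepsilon-\sqrt{\log(2K/\delta)/(2n)}\ge\varepsilon/2$ on the concentration event; both yield the same constant.
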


\begin{proof}

Apply the Taylor expansion
$$
\frac{1}{x}=\frac{1}{a}-\frac{x-a}{a^2}+\frac{(x-a)^2}{xa^2},
$$
so as to get for all $k\in\{1, \dots, K\}$
\begin{equation*}
\frac{1}{n'_k/n}=\frac{1}{p'_k}-\frac{n'_k/n-p'_k}{p'^2_k}+\frac{(n'_k/n-p'_k)^2}{p'^2_k n'_k/n}.
\end{equation*}
This yields the decomposition
\begin{equation*}
\widetilde{\cR}_{\widehat{w}^*,n}(\theta)-\widetilde{\cR}_{w^*,n}(\theta)
= \frac{1}{n} \sum_{i=1}^n \ell(\theta, Z'_i) \sum_{k=1}^K \mathbb{I}\{S'_i=k \} \left( -\frac{p_k}{p'^2_k}\left(\frac{n'_k}{n}-p'_k\right) + \frac{p_k(n'_k/n-p'_k)^2}{p'^2_k n'_k/n} \right).
\end{equation*}
We deduce that
\begin{equation*}
\left\vert
\widetilde{\cR}_{\widehat{w}^*,n}(\theta)-\widetilde{\cR}_{w^*,n}(\theta)
\right\vert \leq \frac{L}{\varepsilon^2} \sum_{k=1}^K \vert n'_k/n-p'_k\vert p_k \left( 1 + \frac{\vert n'_k/n-p'_k\vert}{n'_k/n} \right).
\end{equation*}
By virtue of Hoeffding inequality, we obtain that, for any $k\in\{1,\dots, K\}$ and $\delta\in (0,1)$, we have with probability larger than $1-\delta$:
$$
\left\vert n'_k/n-p'_k\right\vert \leq \sqrt{\frac{\log(2/\delta)}{2n}},
$$
so that, by a union bound, $\max_k \{ n'_k/n \}\geq \varepsilon-\sqrt{\log(2K/\delta)/(2n)}$. This yields the desired result.

\end{proof}

\subsection*{Proof of Theorem \ref{th:excess_risk_PU} }
Observe first that $\vert\vert \Phi \vert\vert_{\infty}\leq \max(2p,\; 1)/\varepsilon$ and
$$
\cR_P(\widetilde{g}_n)-\inf_{g\in\mathcal{G}}\cR_P(g)\leq 2\sup_{g\in \mathcal{G}}\left\vert  \widetilde{\cR}_{\widehat{w}^*,n}(g)-    \widetilde{\cR}_{w^*,n}(g) \right\vert + 2\sup_{g\in \mathcal{G}}\left\vert  \widetilde{\cR}_{w^*,n}(g)-    \cR_{P}(g) \right\vert,
$$
with weighted empirical risk $\widetilde{\cR}_{w^*,n}(g)$ defined in \eqref{eq:PUrisk}.
The result then directly follows from the application of Lemmas \ref{ERM1}-\ref{lem:approx3} combined with the union bound.

\begin{lemma}\label{lem:approx3}
Let $\varepsilon\in(0,\; 1/2)$. Suppose that $q\in (\varepsilon,\; 1-\varepsilon)$. For any $\delta\in (0,1)$, we have with probability larger than $1-\delta$:
$$
\sup_{g\in \mathcal{G}}\left\vert  \widetilde{\cR}_{\widehat{w}^*,n}(g)-    \widetilde{\cR}_{w^*,n}(g) \right\vert
\leq \frac{2(2p+1)}{\varepsilon^2} \sqrt{\frac{\log (2/\delta)}{2n}},
$$
as soon as $n\geq 2\log(2/\delta)/\varepsilon^2$.
\end{lemma}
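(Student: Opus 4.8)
The plan is to mirror the argument of Lemmas~\ref{lem:approx1} and~\ref{lem:approx2}: reduce the uniform deviation to the fluctuation of a single scalar, the empirical positive frequency $n'_+/n$ around its mean $q$, and then conclude with Hoeffding's inequality. Both $\widetilde{\cR}_{w^*,n}$ and $\widetilde{\cR}_{\widehat{w}^*,n}$ arise from the rewriting $\cR_P(g)=-p+\mathbb{E}_{P'}[\,\cdots\,]$ by estimating the expectation, the sole difference being that the plug-in version \eqref{eq:PUrisk} uses $n'_+/n$ where the ideal one uses $q$ (and $n'_-/n=1-n'_+/n$ where the ideal uses $1-q$); the constant $-p$ therefore cancels. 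Writing $A(g)=(1/n)\sum_{i:\,Y'_i=+1}\mathbb{I}\{g(X'_i)=-1\}$ and $B(g)=(1/n)\sum_{i:\,Y'_i=-1}\mathbb{I}\{g(X'_i)=+1\}$, this gives
\[
\widetilde{\cR}_{\widehat{w}^*,n}(g)-\widetilde{\cR}_{w^*,n}(g)=2p\left(\frac{1}{n'_+/n}-\frac{1}{q}\right)A(g)+\left(\frac{1}{n'_-/n}-\frac{1}{1-q}\right)B(g).
\]

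First I would expand each reciprocal with the identity $1/x=1/a-(x-a)/a^2+(x-a)^2/(xa^2)$ employed above, with $(x,a)=(n'_+/n,q)$ in the first term and $(x,a)=(n'_-/n,1-q)$ in the second, noting that $n'_-/n-(1-q)=-(n'_+/n-q)$ so that both reciprocal gaps are driven by $|n'_+/n-q|$. This splits the difference into a linear part and a quadratic remainder. For the linear part I would use the crude bounds $A(g)\le 1$ and $B(g)\le 1$ together with $q,1-q>\varepsilon$, giving a coefficient at most $(2p+1)/\varepsilon^2$. For the remainder, whose denominators contain the random quantities $n'_+/n$ and $n'_-/n$, I would instead use the sharper bounds $A(g)\le n'_+/n$ and $B(g)\le n'_-/n$ (there are exactly $n'_+$ positive and $n'_-$ negative instances), so that the random denominators cancel and, again by $q,1-q>\varepsilon$, the quadratic coefficient is also at most $(2p+1)/\varepsilon^2$. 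Since none of these bounds depend on $g$, the supremum over $\mathcal{G}$ is immediate and yields
\[
\sup_{g\in\mathcal{G}}\left\vert\widetilde{\cR}_{\widehat{w}^*,n}(g)-\widetilde{\cR}_{w^*,n}(g)\right\vert\le\frac{2p+1}{\varepsilon^2}\left\vert\frac{n'_+}{n}-q\right\vert\left(1+\left\vert\frac{n'_+}{n}-q\right\vert\right).
\]

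To finish, I would observe that $n'_+=\sum_{i=1}^n\mathbb{I}\{Y'_i=+1\}$ is a sum of i.i.d.\ Bernoulli$(q)$ variables, so Hoeffding's inequality gives $|n'_+/n-q|\le\sqrt{\log(2/\delta)/(2n)}$ with probability at least $1-\delta$. The hypothesis $n\ge 2\log(2/\delta)/\varepsilon^2$ forces this deviation to be at most $\varepsilon/2\le 1/4$, hence $1+|n'_+/n-q|\le 2$, and the claimed bound $\frac{2(2p+1)}{\varepsilon^2}\sqrt{\log(2/\delta)/(2n)}$ follows. Combined with Lemma~\ref{ERM1} (applied to the bounded integrand that rewrites $\cR_P$ as an expectation under $P'$) and a union bound, this is exactly what the proof of Theorem~\ref{th:excess_risk_PU} invokes.

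The main obstacle is the quadratic remainder: bounding $A(g),B(g)$ by $1$ there would leave $n'_+/n$ and $n'_-/n$ in the denominators, forcing an additional lower bound on both on the same Hoeffding event (as is done in Lemma~\ref{lem:approx1}). Using instead $A(g)\le n'_+/n$ and $B(g)\le n'_-/n$ makes these denominators cancel exactly, so the second-order contribution is harmless and a single concentration inequality suffices to conclude.
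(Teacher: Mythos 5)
Your proof is correct and follows essentially the same route as the paper's: the same Taylor expansion of the reciprocals around $q$ and $1-q$, the same split into a linear part and a quadratic remainder, and a single application of Hoeffding's inequality to $n'_+/n$. The only (minor, and arguably cleaner) difference is that you bound the indicator averages by $n'_+/n$ and $n'_-/n$ in the quadratic remainder so that the random denominators cancel, whereas the paper keeps $A(g),B(g)\le 1$ there and instead lower-bounds $\min\{n'_+/n,\,1-n'_+/n\}$ by $\varepsilon-\sqrt{\log(2/\delta)/(2n)}$ on the same Hoeffding event.
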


\begin{proof}
Apply twice the Taylor expansion
$$
\frac{1}{x}=\frac{1}{a}-\frac{x-a}{a^2}+\frac{(x-a)^2}{xa^2},
$$
so as to get
\begin{eqnarray*}
\frac{1}{n_+'/n}&=&\frac{1}{q}-\frac{n_+'/n-q}{q^2}+\frac{(n_+'/n-q)^2}{q^2 n'_+/n},\\
\frac{1}{n_-'/n}&=&\frac{1}{1-q}-\frac{n_-'/n-1+q}{(1-q)^2}+\frac{(n_-'/n-1+q)^2}{(1-q)^2n'_-/n}.
\end{eqnarray*}
This yields the decomposition
\begin{multline*}
\widetilde{\cR}_{\widehat{w}^*,n}(g)-\widetilde{\cR}_{w^*,n}(g)=-\frac{2p}{q^2}\left(\frac{n'_+}{n}-q\right)\frac{1}{n}\sum_{i=1}^n\mathbb{I}\{g(X'_i)=-1,\; Y'_i=+1 \}\\
-\frac{1}{(1-q)^2}\left(\frac{n'_-}{n}-1+q\right)\frac{1}{n}\sum_{i=1}^n\mathbb{I}\{g(X'_i)=+1,\; Y'_i=-1 \}+\frac{2p(n_+'/n-q)^2}{q^2 n'_+/n}\frac{1}{n}\sum_{i=1}^n\mathbb{I}\{g(X'_i)=-1,\; Y'_i=+1 \}\\
+\frac{(n_-'/n-1+q)^2}{(1-q)^2n'_-/n}\frac{1}{n}\sum_{i=1}^n\mathbb{I}\{g(X'_i)=+1,\; Y'_i=-1 \}.
\end{multline*}
We deduce that
\begin{equation*}
\left\vert
\widetilde{\cR}_{\widehat{w}^*,n}(g)-\widetilde{\cR}_{w^*,n}(g)
\right\vert \leq \frac{\vert n'_+/n-q\vert}{\varepsilon^2}\left( 2p + 1 +\vert n'_+/n-q\vert \left( \frac{2p}{n'_+/n}+\frac{1}{1-n'_+/n} \right) \right).
\end{equation*}
By virtue of Hoeffding inequality, we obtain that, for any $\delta\in (0,1)$, we have with probability larger than $1-\delta$:
$$
\left\vert n'_+/n-q\right\vert \leq \sqrt{\frac{\log(2/\delta)}{2n}},
$$
so that, in particular, $\min\{ n'_+/n,\; 1- n'_+/n\}\geq \varepsilon-\sqrt{\log(2/\delta)/(2n)}$. This yields the desired result.
\end{proof}

\section*{Appendix - Inaccurate Prior Information about the Test Distribution}
\label{sec:inacc_proba}
As noticed in section \ref{sec:background}, it may happen that the rate of positive instances in the target population is approximately known only. Suppose that our guess for $p$ is $\widetilde{p}$ such that $\vert p-\widetilde{p} \vert\leq \zeta$, with $\zeta\in (0,1)$. Denote by $\widetilde{P}$ the distribution over $\mathcal{X}\times\{-1,+1\}$ under which $X$ is drawn from $\widetilde{p}F_++(1-\widetilde{p})F_-$ and such that $\mathbb{P}_{(X,Y)\sim\widetilde{P}}\{Y=1\mid X=x\}=\mathbb{P}_{(X,Y)\sim P}\{Y=1\mid X=x\}=\eta(x)$.

By a change of measure we have,
$$
\mathbb{P}_{\widetilde{P}}(Y\neq g(X)) = \mathbb{P}_{P}(Y\neq g(X))
+ \mathbb{E}_P\left[\left(\frac{d\widetilde{P}}{dP}(X,Y) - 1\right)\mathbb{I}\{Y\neq g(X)\}\right],
$$
which allows to bound the difference of the classification risks of $g$ under $P$ and $\widetilde{P}$:
$$
\left|\mathcal{R}_{\widetilde{P}}(g) - \mathcal{R}_P(g)\right| \le \mathbb{E}_P\left[\left|\frac{d\widetilde{P}}{dP}(X,Y) - 1\right|\right]
= 2 |\widetilde{p}-p| \le 2\zeta.
$$
 
\section*{Appendix - Additional Numerical Experiments}

In this Appendix, more details about the experiments carried out are provided.

\subsection*{Importance of reweighting for simple distributions}

Introduce a random pair $(X,Y)$ in $[0,1] \times \{-1, +1\}$ where $X\mid Y=+1$
has for probability density function (pdf) $f_+(x) = ( 1 + \alpha ) x^\alpha, \alpha>0$ and
$X\mid Y=-1$ has for pdf $f_-(x) = ( 1 + \beta
)(1-x)^\beta, \beta>0$. As in \Cref{sec:background}, the train and test datasets have
different class probabilities $p'$ and $p$ for $Y=+1$.
The loss $\ell$ is defined as $\ell(\theta,z) = \I\{ (x-\theta)y \ge 0\}$ where
$\theta>0$ is a learnt parameter.

The true risk can be explicitely calculated. For $\theta>0$, we have
\begin{align*}
    R_P(\theta) = p \theta^{1 + \alpha} + (1-p) (1-\theta)^{1+\beta},
\end{align*}
and the optimal threshold $\theta^*_{p}$ can be found by derivating the risk
$R_P(\theta)$.  The derivative is zero when $\theta$ satisfies
\begin{align}
    p(1+\alpha) \theta^\alpha = (1-p)(1+\beta)(1-\theta)^\beta.
    \label{optimal_threshold_sim_exp}
\end{align}
Solving \cref{optimal_threshold_sim_exp} is straightforward for well-chosen values of 
$\alpha, \beta$, which are detailed in \cref{fig:tab_sim_exp}. The excess error 
$\mathcal{E}(p',p) = R_P(\theta^*_{p'}) - R_P(\theta^*_{p})$
for the diagonal entries of \cref{fig:tab_sim_exp} are plotted in
\cref{fig:plot_sim_exp}, in the infinite sample case.

\begin{figure}[h]
    \centering
    \begin{tabular}[h]{lcccc}
	\toprule
	   	& \multicolumn{4}{c}{$(\alpha, \beta)$} \\
	 \cmidrule(lr){2-5}
	 & $(0,0)$ & $(1/2, 1/2)$ & $(1,1)$ & $(2,2)$ \\
	 \midrule
	 \smallskip
	$\theta^*_p$ & $[0,1]$ & $\frac{(1-p)^2}{p^2 + (1-p)^2}$ &
	$1-p$ &  $\frac{\sqrt{1-p}}{\sqrt{p} + \sqrt{1-p}}$\\
	 \bottomrule
    \end{tabular}
    \caption{Optimal parameters $\theta^*$ for different values of $\alpha, \beta$.}
    \label{fig:tab_sim_exp}
\end{figure}

\begin{figure}[h]
    \centering

    \minipage{0.25\textwidth}
	\includegraphics[width=\linewidth]{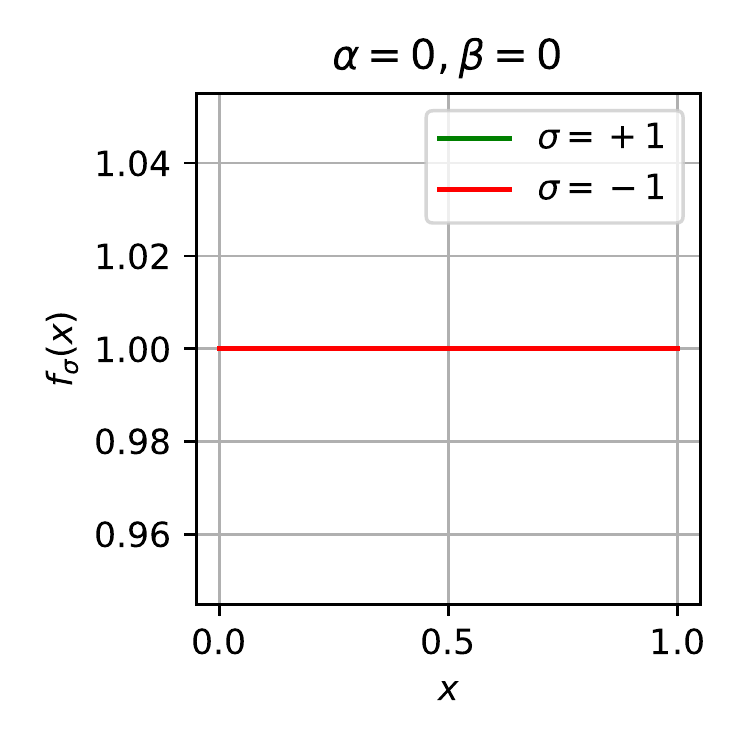}
    \endminipage
    \hfill
    \minipage{0.25\textwidth}
	\includegraphics[width=\linewidth]{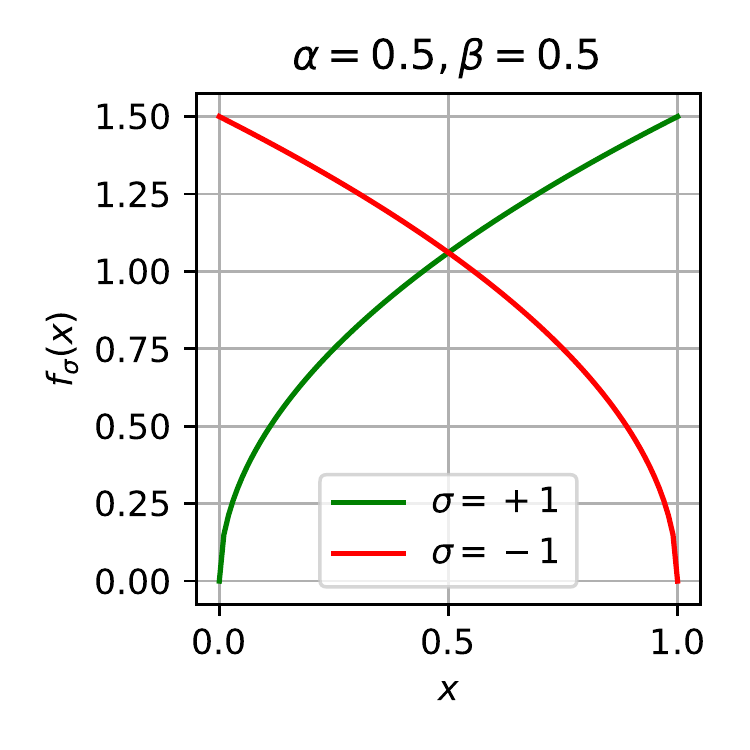}
    \endminipage
    \hfill
    \minipage{0.25\textwidth}
	\includegraphics[width=\linewidth]{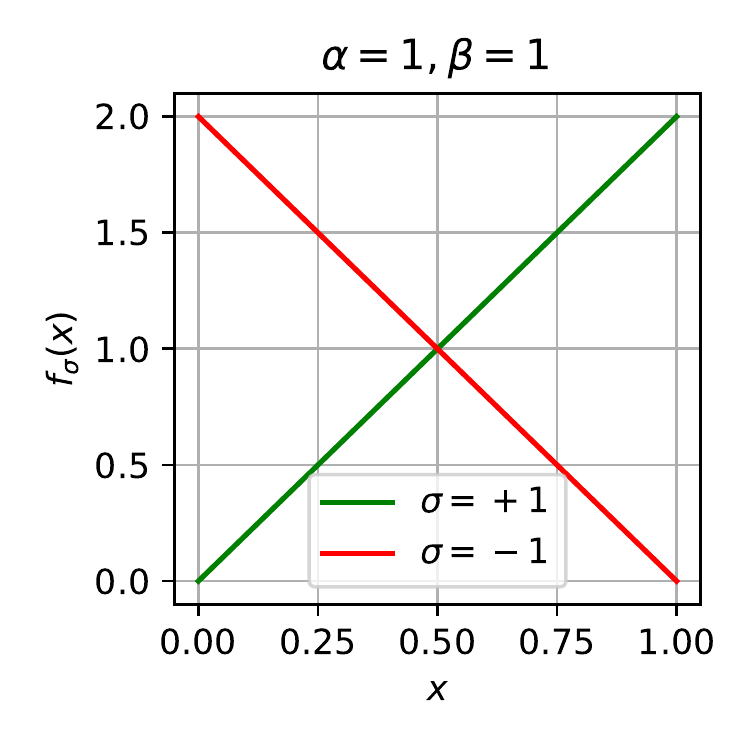}
    \endminipage
    \hfill
    \minipage{0.25\textwidth}
	\includegraphics[width=\linewidth]{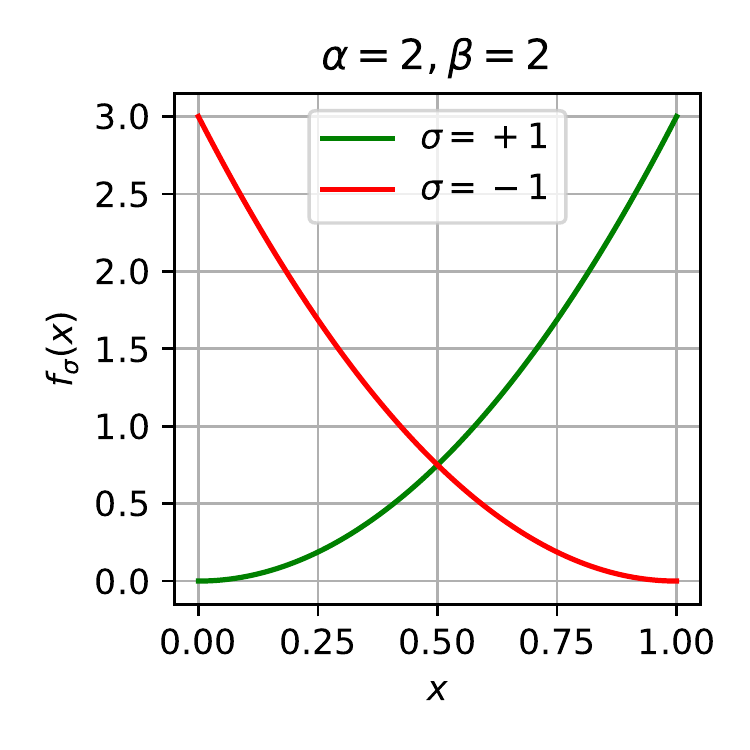}
    \endminipage

    \includegraphics[width=0.5\linewidth]{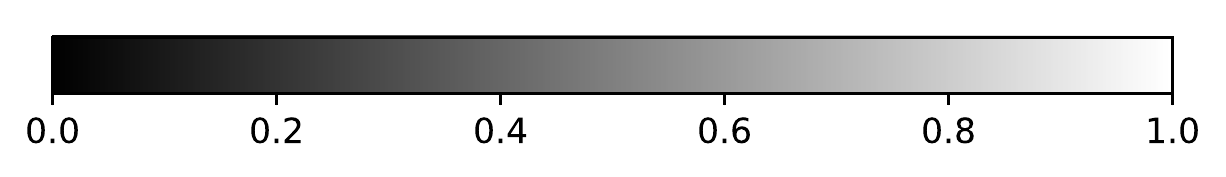}

    \minipage{0.25\textwidth}
	\includegraphics[width=\linewidth]{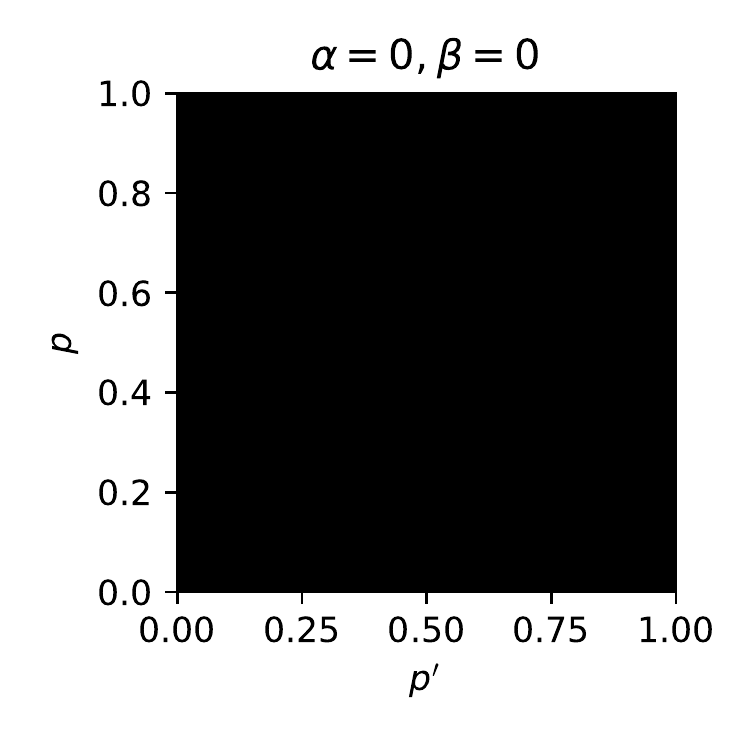}
    \endminipage
    \hfill
    \minipage{0.25\textwidth}
	\includegraphics[width=\linewidth]{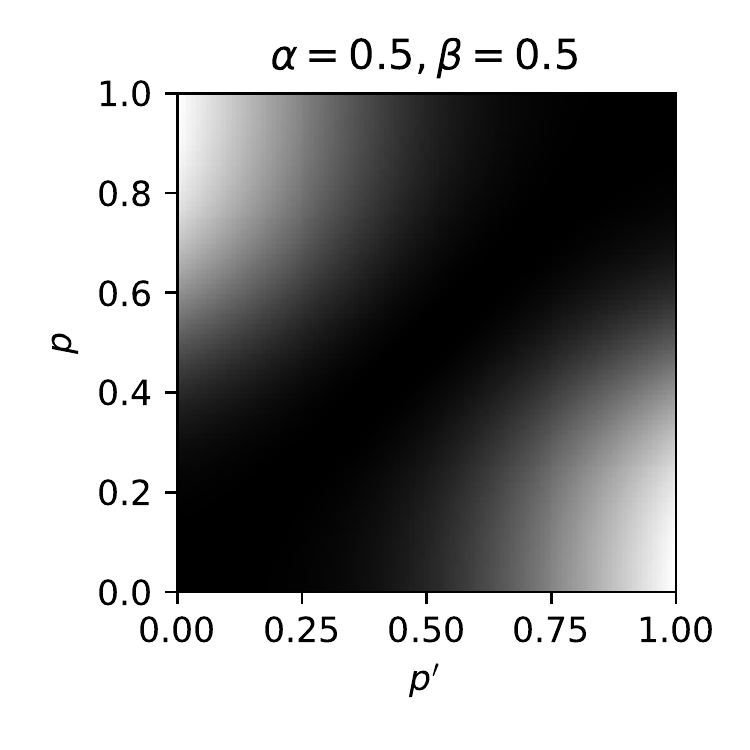}
    \endminipage
    \hfill
    \minipage{0.25\textwidth}
	\includegraphics[width=\linewidth]{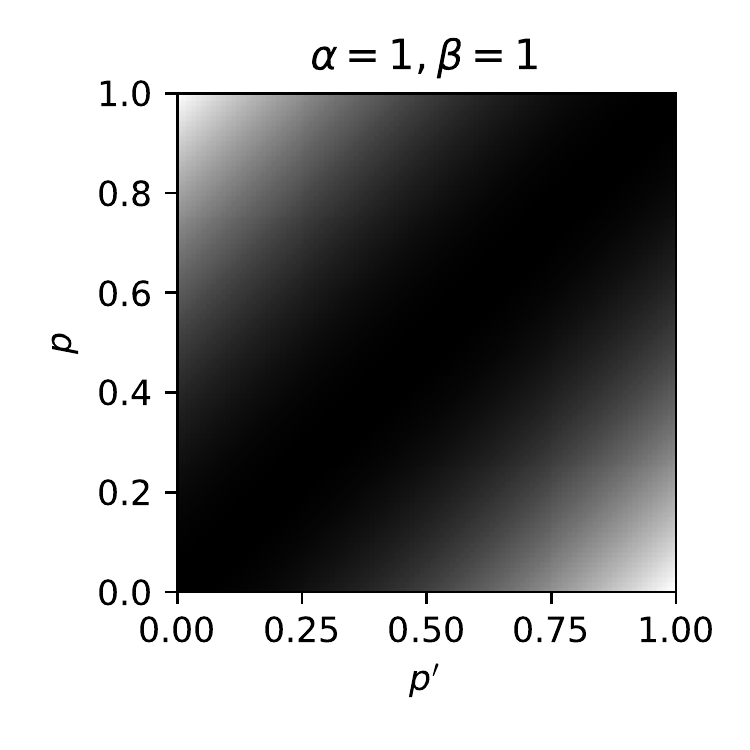}
    \endminipage
    \hfill
    \minipage{0.25\textwidth}
	\includegraphics[width=\linewidth]{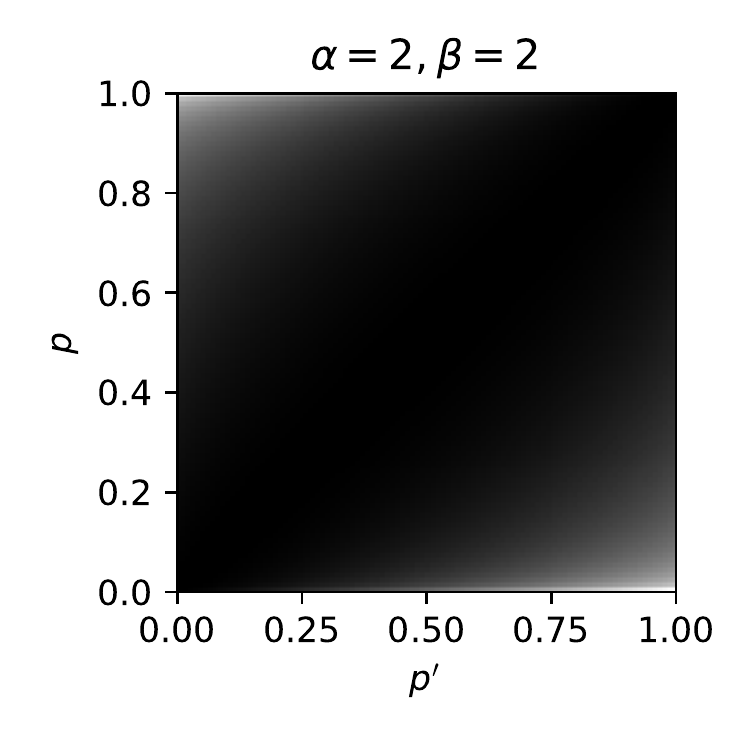}
    \endminipage

    \caption{Pdf's and values of the excess risk $\mathcal{E}(p',p)$ for
    different values of $\alpha, \beta$.}
    \label{fig:plot_sim_exp}
\end{figure}

The results of \cref{fig:plot_sim_exp} show that the optimum for the train 
distribution is significantly different from the optimum for the test distribution
when the problem involves Bayes noise.

\subsection*{On the real data experiment}\label{sec:generalities_expes}

\paragraph{Strategy to induce bias in balanced datasets}

In the real data experiment described in \Cref{sec:num},
a strategy is used to induce class distribution bias or
strata bias, since the data is uniformly distributed on strata for the train and test set.
Since the experiment involves a small test dataset, it is kept
intact, while we discard elements of the train dataset to induce bias between
the train and test datasets. The bias is parameterized by a single parameter
$\gamma$, such that when $\gamma$ is close to one, there is little strata or
class bias, while when $\gamma$ approaches $0$, bias is extreme.

The bias we induce is inspired by a power law, which is often used
to model unequal distributions. 
The distribution on the strata of the train set is modified so that the
generated train set follows a power law.
Formally, the power law distribution $\{p_k'\}_{k=1}^K$ over $S\in\{ 1, \dots, K\}$,
is defined for all $1 \le k \le K$ as
\begin{align*}
    p_k' = \frac{\gamma^{- \frac{\lfloor K/2 \rfloor}{\sigma(k)}} p_k}{ 
    \sum_{l=1}^K \gamma^{- \frac{\lfloor K/2 \rfloor}{\sigma(k)}} p_k },
\end{align*}
where $\sigma$ is a random permutation in $\{1,\dots,K\}$.

To generate a train dataset with modality distribution $\{p_k'\}_{k=1}^K$,
we sample instances from the original train data set $\mathcal{D}_n^\circ=
\{ (X_i', Y_i', S_i') \}_{i=1}^n$, where $Y_i'$ is the class, $S_i'$ is the
strata. The generated train dataset is noted $\mathcal{D}_n$. First, we define
candidates $\mathcal{I}_k = \{ i \mid 1 \le i \le n, S_i' = k \}$ for each
strata $k \in \{1, \dots, K\}$.
Then we select one of the candidate sets $\mathcal{I}_k$ with the probabilities
$p_k'$'s, to remove one of its elements, selected at random, and place it in the
train dataset $\mathcal{D}_n$. We repeat this operation until one of the
candidate sets is empty. A more efficient implementation of this process was used in the
provided code.

\paragraph{Models} We compare two models: a linear model and a multilayer perceptron (MLP) with
one hidden layer of size 1,524.  Given a classification problem of input $x$ of dimension $d$
with $K$ classes, precisely with $d=2048, K=1000$, a linear model simply learns the weights matrix $W \in
\R^{d\times K}$ and the bias vector $b \in \R^K$ and outputs logits 
$l = W^\top x + b$. On the other hand, the MLP has a hidden layer of dimension
$h = \lfloor(d+K)/2 \rfloor$ and learns the weights matrices 
$W_1\in \R^{d, h}, W_2 \in \{ h, K \}$ and bias vectors $b_1 \in \R^h, 
b_2 \in \R^K$ and outputs logits $l = W_2^\top h(W_1^\top x + b_1) + b_2$
where $h$ is the ReLU function, i.e. $h: x \mapsto \max(x,0)$. The MLP model
involves approximatively 5M (million) parameters, while the MLP model uses only
2M. The weight decay or l2 penalization for the linear model 
and MLP model are written, respectively
\begin{align*}
    \mathcal{P} = \frac{1}{2} \lVert W \rVert \quad \text{and} \quad 
    \mathcal{P} = \frac{1}{2} \lVert W_1 \rVert + \frac{1}{2} \lVert W_1 \rVert.
\end{align*}

\paragraph{Cost function}
The cost function is the Softmax Cross-Entropy (SCE), which is the most used
classification loss in deep learning. Specifically, given logits $l = 
(l_1,\dots,l_K) \in \R^K$, the softmax function is $\gamma : \R^k \to [0,1]^K$
with $\gamma = (\gamma_1, \dots, \gamma_K)$ and for all $k \in \{1,\dots,K\}$,
\begin{align*}
    \gamma_k : l \mapsto \frac{\exp(l_k)}{\sum_{j=0}^K \exp(l_j)}.
\end{align*}
Given an instance with logits $l$ and ground truth class value $y$, the
expression of the softmax cross-entropy $c(l,y)$ is
\begin{align*}
    c(l,y) = \sum_{k=1}^K \I\{ y = k \} \log \left(\gamma_k(l)\right).
\end{align*}
The loss that is reweighted depending on the cases as described in
\Cref{sec:main} is this quantity $c(l,y)$. The loss on the test set is never
reweighted, since the test set is the target distribution.  The weights and
bias of the model that yield the logits are tuned using backpropagation on this
loss averaged on random batches of $B$ elements of the training data summed
with the regularization term $\lambda \cdot \mathcal{P}$ where $\lambda$ is a
hyperparameter that controls the strength of the regularization. 

\paragraph{Preprocessing, optimization, parameters} 
The images of ILSVRC were encoded using the implementation of ResNet50 provided
by the library \emph{keras}, see \cite{chollet2015keras}, by taking the
flattened output of the last convolutional layer.

Optimization is performed using a momentum batch gradient descent algorithm on batches
of size 1,000, which updates the parameters $\theta_t$ at timestep $t$ with an
update vector $v_t$ by performing the following operations:
\begin{align*}
    v_t &= \gamma v_{t-1} + \eta\nabla C(\theta_{t-1}), \\
    \theta_{t} &= \theta_{t-1} - v_t,
\end{align*}
where $\eta = 0.001$ is the learning rate and $\gamma=0.9$ is the momentum, as explained in
\cite{DBLP:journals/corr/Ruder16}.
The weight decay parameters $\lambda$ were cross-validated by trying values on the
logarithmic scale $\{10^{-4}, 10^{-3}, 10^{-2}, 10^{-1}, 1\}$ 
and then we tried more fine-grained values between the two best results,
in practice $10^{-3}$ was best and $10^{-2}$ was second best so we tried
$\{ 0.002, 0.003, 0.004, 0.005 \}$. The standard deviation initialization of
the weights $\sigma_0= 0.01$ was chosen by trial-and-error to avoid overflows. 
The learning rate was fixed after trying different values to have fast
convergence while keeping good convergence properties.

\paragraph{Stratified information for ImageNet}

In this section, we detail the data preprocessing necessary to assign strata to
the ILSVRC data. These were constructed using a list of 27 high-level
categories found on the ImageNet website\footnote{
\url{http://www.image-net.org/about-stats}}. 

Each ILSVRC image has a ground truth low level
synset, either from the name of the training instance, or in the validation 
textfile for the validation dataset, that is provided by the ImageNet website.
The ImageNet API \footnote{\url{http://image-net.org/download-API}}
provides the hierarchy of synsets in the form of \emph{is-a} relationships, e.g.
\emph{a flamingo is a bird}. Using this information, for each synset in the 
validation and training database, we gathered all of its ancestors in the hierarchy
that were high-level categories.
Most of the synsets had only one ancestor, which then accounts for one stratum.
Some of the synsets had no ancestors, or even several ancestors in the table,
which creates extra strata, either a \emph{no-category} stratum or a
strata composed of the union of several ancestors. 
The final distribution of the dataset over the created strata is summarized by
\Cref{fig:dist_strata_imagenet}.  Observe the presence of a \emph{no\_strata}
stratum and of unions of two high-level synsets strata, e.g.
\emph{n00015388\_n01905661}.
The definitions of the strata can be requested to the API, 
see \cref{strata_definitions_imagenet} for examples.

\begin{figure}[h]
\centering
\includegraphics[width=\linewidth]{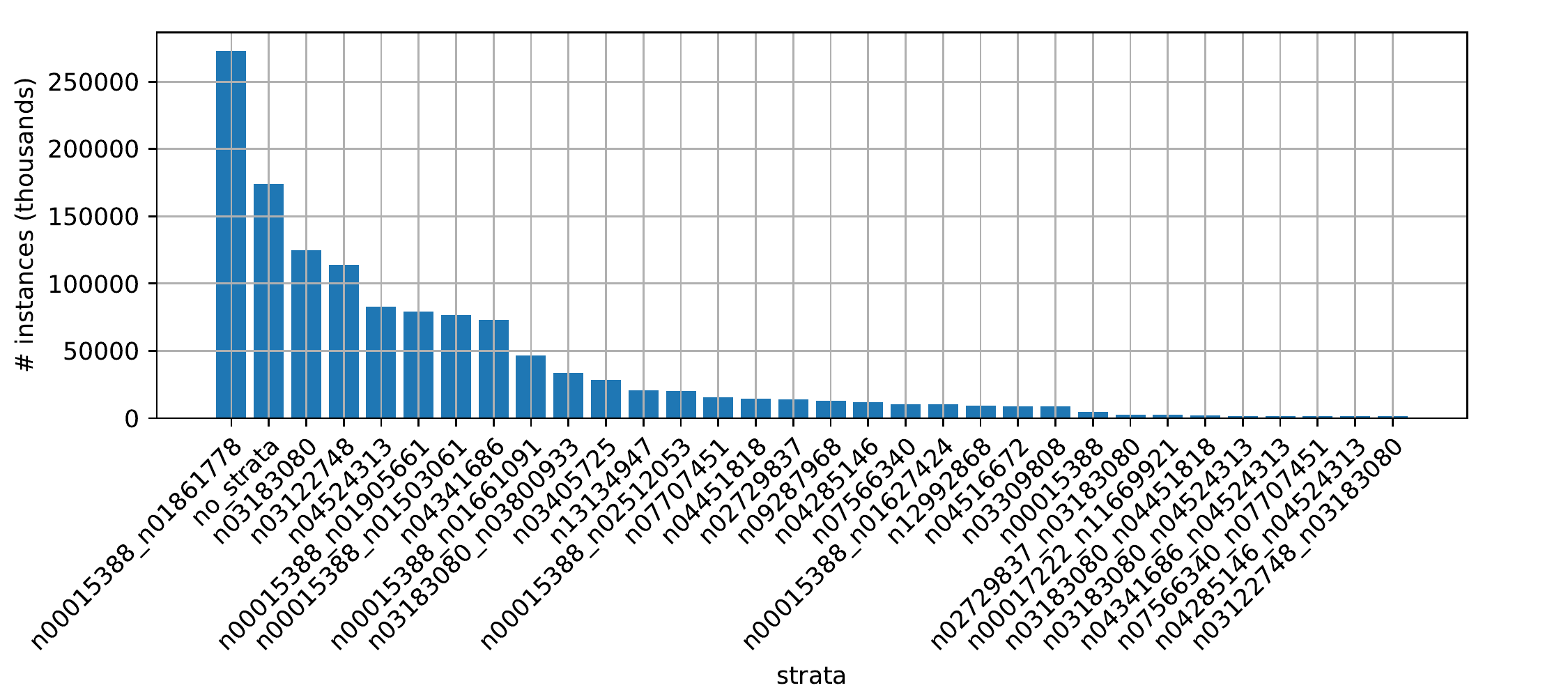}
\caption{Distribution of the ImageNet train dataset over the created strata,
with example definitions in \cref{strata_definitions_imagenet}.}
\label{fig:dist_strata_imagenet}
\end{figure}

\begin{figure}
    \centering
\begin{tabular}[h]{llll}
    \toprule
    \multicolumn{2}{l}{Strata name} & \multicolumn{2}{l}{Definition}\\
	 \cmidrule(lr){1-2}
	 \cmidrule(lr){3-4}
n00015388 & n01861778 & animal, animate being, beast (\dots)& mammal, mammalian \\
\multicolumn{2}{l}{n04524313} & \multicolumn{2}{l}{vehicle} \\
n13134947 & & fruit & \\
n00015388 & n02512053 & animal, animate being, beast (\dots) & fish \\
n00017222 & n11669921 & plant, flora, plant life & flower \\
n07566340 & n07707451 & foodstuff, food product & vegetable, veggie, veg \\
    \bottomrule
\end{tabular}
\caption{Examples of definitions of the strata created for the experiments in
\Cref{sec:num}.}\label{strata_definitions_imagenet}
\end{figure}

\end{document}